\definecolor{darkblue}{rgb}{0, 0, 0.5}
\newcommand{\RR}{\mathbb{R}}
\newcommand{\PP}{\mathbb{P}}
\newcommand{\EE}{\mathbb{E}}
\newcommand{\MM}{\mathbb{M}}
\newcommand{\QQ}{\mathbb{Q}}
\newcommand{\cG}{\mathcal{G}}
\newcommand{\cL}{\mathcal{L}}
\newcommand{\eps}{\varepsilon}
\newcommand{\simiid}{\overset{\textrm{i.i.d.}}{\sim}}
\theoremstyle{plain}
\newtheorem{thm}{Theorem}[section]
\newtheorem{lem}[thm]{Lemma}
\newtheorem{cor}[thm]{Corollary}
\theoremstyle{definition}
\DeclareMathOperator*{\argmin}{argmin}
\icmltitlerunning{One-sided Matrix Completion from Two Observations Per Row}
\begin{document}

\twocolumn[
\icmltitle{One-sided Matrix Completion from Two Observations Per Row}

% It is OKAY to include author information, even for blind
% submissions: the style file will automatically remove it for you
% unless you've provided the [accepted] option to the icml2023
% package.

% List of affiliations: The first argument should be a (short)
% identifier you will use later to specify author affiliations
% Academic affiliations should list Department, University, City, Region, Country
% Industry affiliations should list Company, City, Region, Country

% You can specify symbols, otherwise they are numbered in order.
% Ideally, you should not use this facility. Affiliations will be numbered
% in order of appearance and this is the preferred way.
\icmlsetsymbol{equal}{*}

\begin{icmlauthorlist}
\icmlauthor{Steven Cao}{sch}
\icmlauthor{Percy Liang}{sch}
\icmlauthor{Gregory Valiant}{sch}
\end{icmlauthorlist}

\icmlaffiliation{sch}{Stanford University}

\icmlcorrespondingauthor{Steven Cao}{shcao@stanford.edu}

% You may provide any keywords that you
% find helpful for describing your paper; these are used to populate
% the "keywords" metadata in the PDF but will not be shown in the document
\icmlkeywords{Machine Learning, ICML, Matrix Completion, Subspace Estimation, High-dimensional Statistics, Random Matrix Theory}

\vskip 0.3in
]

\printAffiliationsAndNotice{}  % leave blank if no need to mention equal contribution
%\printAffiliationsAndNotice{\icmlEqualContribution} % otherwise use the standard text.

\begin{abstract}

Given only a few observed entries from a low-rank matrix $X$, matrix completion is the problem of imputing the missing entries, and it formalizes a wide range of real-world settings that involve estimating missing data. However, when there are too few observed entries to complete the matrix, what other aspects of the underlying matrix can be reliably recovered? We study one such problem setting, that of ``one-sided'' matrix completion, where our goal is to recover the right singular vectors of $X$, even in the regime where recovering the left singular vectors is impossible, which arises when there are more rows than columns and very few observations. We propose a natural algorithm that involves imputing the missing values of the matrix $X^TX$ and show that even with only two observations per row in $X$, we can provably recover $X^TX$ as long as we have at least $\Omega(r^2 d \log d)$ rows, where $r$ is the rank and $d$ is the number of columns. We evaluate our algorithm on one-sided recovery of synthetic data and low-coverage genome sequencing. In these settings, our algorithm substantially outperforms standard matrix completion and a variety of direct factorization methods.

\end{abstract}

\section{Introduction}

Matrix completion, the problem of recovering a low-rank matrix after observing only a subset of its entries, formalizes a wide range of real-world settings that involve estimating missing data, including recommending movies to users~\citep{koren2009matrix}, reducing MRI scan time via parallel imaging~\citep{shin2014calibrationless}, and quantifying annotator disagreement in dataset crowdsourcing~\citep{gordon2021disagreeement}. Over the years, a flurry of research has produced a robust understanding of the problem~\citep[\textit{inter alia}]{candes2009exact,keshavan2009matrix}. However, most of our understanding is restricted to settings where each row and each column have more observations than the rank of the underlying matrix. It is natural that past work operated under this assumption because full matrix completion is impossible without it: for a rank-$r$ matrix $X$ with shape $m \times d$, one can show that estimating the matrix is impossible with $o(r(m + d))$ observations. Nonetheless, many important applications do not satisfy this assumption: for example, in low-coverage genotype imputation~\citep{li2009genotype}, we might sequence $d=2{,}000$ people for $10{,}000$ genetic variants each, out of the $m = 10{,}000{,}000$ genetic variants in humans. Represented as a matrix, we have a $10{,}000{,}000 \times 2{,}000$ matrix with $2{,}000 * 10{,}000 = 20{,}000{,}000$ total observations, or about two observations per row on average, which is certainly much less than the rank of the matrix. Given too few observed entries to fully complete the matrix, what other aspects of the underlying matrix can be reliably recovered?

\begin{figure*}[t]
\begin{center}
    \includegraphics[width=\linewidth]{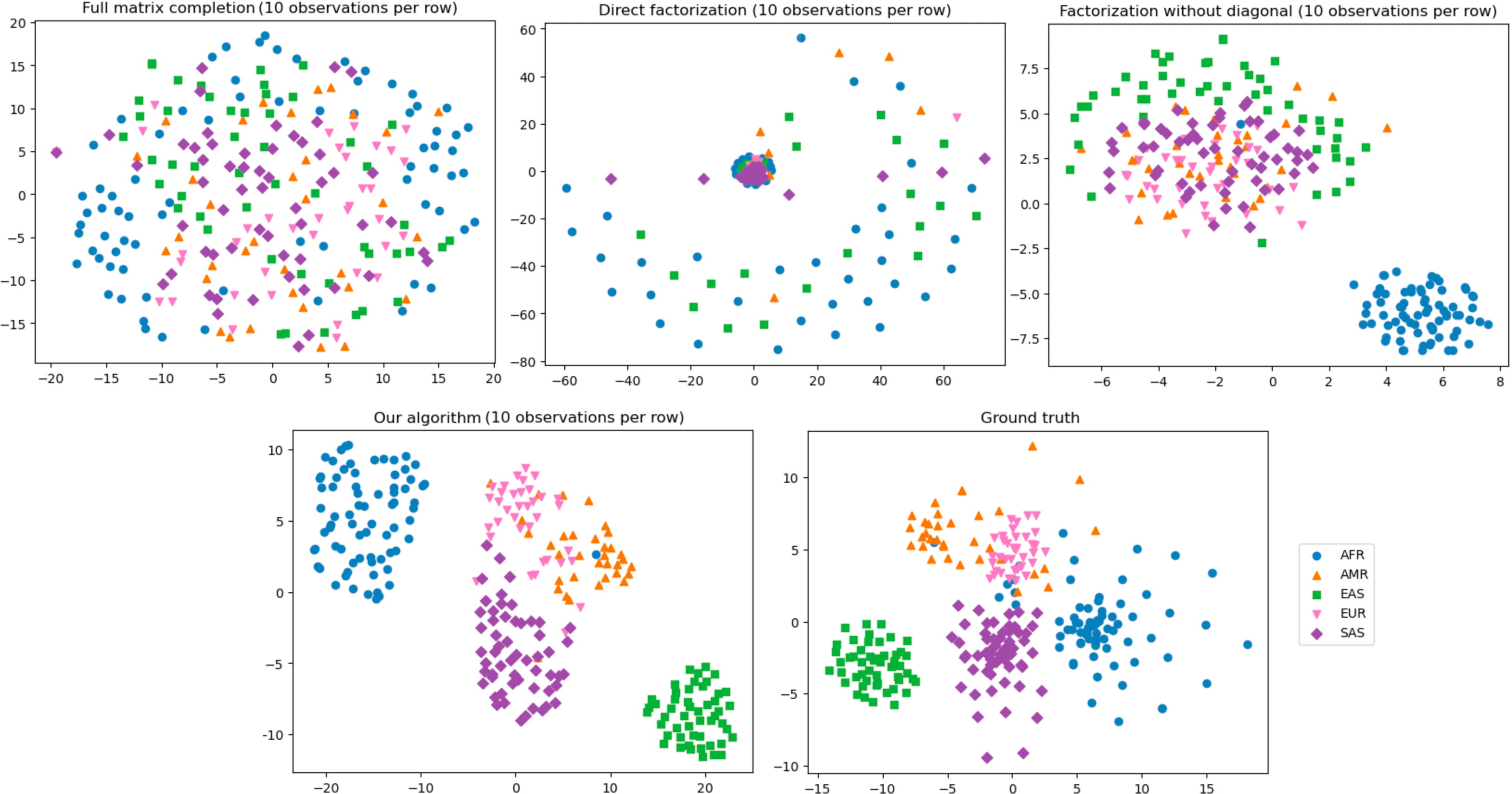}
\end{center}
\caption{\label{figure:tsne-1} TSNE~\citep{vandermaaten2008visualizing} visualization of column factor (i.e.\ eigenvalue-weighted right singular vector) recovery on the 1000genomes dataset~\citep{fairly2019international} with $m = 1\,500\,000$ bialleles, $d = 250$ people, and $k = 10$ observations per row. Each of the $d$ column factors can be thought of as a vector representation of each person and their underlying genotype variation factors, with ethnicity being the strongest contributor. Colors represent ethnicity (AFR: blue, AMR: orange, EAS: green, EUR: pink, SAS: purple). Methods (left to right, top to bottom): (1) Full matrix completion: factor the output of matrix completion on $P_E(X)$ (where given the observation mask $E$, $P_E$ sets unobserved entries to zero), (2) Direct factorization: factor $P_E(X)^TP_E(X)$, (3) Factorization without diagonal: factor $P_E(X)^TP_E(X)$ with the diagonal set to zero~\citep{cai2021subspace}, (4) Our algorithm: perform matrix completion to estimate $X^TX$ (see Equation~\ref{eqn:program-main}) and then factor it, and (5) Ground truth: factor the original fully observed matrix. Our method produces column factors closest to the ground truth, while full matrix completion fails because there are too few observations per row.}
\end{figure*}

In this paper, we study settings like these and show that even with just \emph{two} entries per row, as long as there are sufficiently many rows, we can perform ``one-sided'' recovery and estimate the right singular vectors $Q \in \RR^{d \times r}$. In the aforementioned genomics example, this means recovering the $r$ underlying genotype variation factors for each of the $d$ people (e.g.\ ethnicity, sex, and so on; see Figure~\ref{figure:tsne-1}). This result is despite the fact that we have close to no information about the left singular vectors $P \in \RR^{m \times r}$.

This result might seem counterintuitive due to how ill-posed the recovery problem is. In particular, let $X = UV^T$ denote the ground truth rank-$r$ matrix with factors $U \in \RR^{m \times r}$ and $V \in \RR^{d \times r}$. Then, given fewer than $r$ observations per row, for \emph{any} $\tilde V \in \RR^{d \times r}$ for which every subset of $r$ rows is linearly independent, we can find a $\tilde U \in \RR^{m \times r}$ using linear inversion such that $\tilde X=\tilde U\tilde V^T$ agrees with the observations.\footnote{For example, suppose we are given exactly two observations per row $X_{i,a(i)}$ and $X_{i,b(i)}$, where $i$ is the row index and $a(i), b(i)$ are the two observed locations. Then, given any pairwise linearly independent set of rank-2 vectors $v_1, ..., v_d \in \RR^{2}$, we can choose each $u_1, ..., u_m \in \RR^2$ by inverting the constraints $\langle u_i, v_{a(i)} \rangle = X_{i, a(i)}$ and $\langle u_i, v_{b(i)} \rangle = X_{i, b(i)}$. Then, stacking the $u_i$'s and $v_i$'s, we have a matrix $\tilde X = \tilde U \tilde V^T$ that agrees with the data.} In other words, regardless of how many rows we have, observing two entries per row is not enough to distinguish between a wide range of low-rank matrices, each with drastically different right singular vectors. 

So then how is recovery possible? Recall that when constructing the distractors $\tilde X$, the left factors $\tilde U$ were chosen via linear inversion, making the observation locations and $\bar X$ dependent on one another. Then, to rule out these distractors, we can make the assumption that the observation locations are chosen randomly, independently of the underlying matrix. Intuitively, this assumption lets us rule out matrices that are too ``correlated'' with the observation locations: for example, we can rule out candidate matrices whose entries consistently are smaller at the observation locations than outside, which is often the case for distractor matrices constructed by linear inversion. While our algorithm does not explicitly rule out matrices in this way, we provide this discussion to emphasize the importance of random sampling in this setting and provide intuition for why the apparent ill-posedness is not fatal.

Algorithmically, our main idea is that each pair of observations in a row, denoted $X_{i, a(i)}$ and $X_{i, b(i)}$, produces a noisy estimate of entry $(a(i), b(i))$ of the matrix $\frac{1}{m} X^TX$. Then, given enough rows, we have sufficiently many observations to impute the missing entries of $\frac{1}{m} X^TX$, which we can then factor to obtain the right-side singular vectors (or equivalently, recover the rowspace) of $X$. Specifically, we find that $m = \Omega(\alpha^2 rd \log d)$ rows is sufficient to complete $\frac{1}{m} X^TX$ in additive Frobenius error, where $\alpha$ denotes the maximum squared entry $\alpha = \max_{ij} X_{ij}^2$. From this bound, we show that rowspace recovery is possible with $m = \Omega(r^2 d \log d)$ rows, with synthetic experiments suggesting that the $r^2$ dependence is fundamental.

The idea of operating on $X^TX$ when $X$ is unbalanced has been explored in a variety of papers on noisy matrix factorization and subspace estimation, problems closely related to ours~\citep[\textit{inter alia}]{gonen2016subspace,donoho2022optimal,montanari2022fundamental}. Broadly speaking, these papers analyze the error of directly factoring a noisy or incomplete version of $X^TX$, while we focus on showing how the missing values of $X^TX$ can be imputed. The papers most directly related to ours are \citet{montanari-sun-2018} and \citet{cai2021subspace}, who study direct factorization for subspace estimation from incomplete observations. The main difference with our work is that they assume that the observations are uniform over the matrix, making it unclear how reliant the algorithm is on the heavy rows which happen to have many observations. In contrast, we study one-sided recovery in the setting where \textit{every} row has just two observations. Nonetheless, their direct factorization algorithms are still applicable to our setting. In our experiments, we find that compared to direct factorization, imputing the missing values of $X^TX$ before factoring produces substantially better subspace estimates.

Empirically, we find that our algorithm indeed recovers the right-side singular vectors even when full matrix completion is impossible. We evaluate on synthetic data and the 1000genomes dataset~\citep{fairly2019international}, a real-world example where the matrix is highly incomplete with many more rows than columns. In these settings, our algorithm substantially outperforms standard matrix completion and a variety of direct factorization methods.

\section{Main Result}\label{sec:mainresults}

\textbf{Notation}: for a matrices $A$ and $B$, we use $\|A\|_\text{max}$ to denote $\max_{i,j} |A_{ij}|$, $\|A\|_\text{op}$ to denote the operator norm, $\|A\|_\text{nuc}$ to denote the nuclear norm, $\|A\|_F$ to denote the Frobenius norm, and $\langle A, B \rangle = \text{tr}(A^TB)$ to denote the matrix inner product. We use $E_{i,j} \in \RR^{d \times d}$ to denote the matrix with 1 in entry $(i, j)$ and 0 elsewhere, $f \lesssim g$ to denote that there exists a universal constant $c$ such that $f \leq cg$, and $[d]$ to denote the set $\{1, 2, ..., d \}$. Given an observation mask $E \in \{0,1\}^{m \times d}$ and a matrix $A \in \RR^{m \times d}$, $P_E(A)$ will denote the elementwise multiplication of $E$ and $A$.

The problem setup is as follows: from a rank-$r$ matrix $X \in \RR^{m \times d}$, we randomly observe two entries per row, which we can represent as indices $(a(1), b(1)), ... , (a(m), b(m))$ drawn i.i.d.\ uniformly from $\{(j_1,j_2) : j_1, j_2 \in [d],\ j_1 \neq j_2\}$. We wish to estimate the matrix $\Theta^* = \frac{1}{m}X^TX$.

As an example to provide intuition, let entry $X_{i,j}$ represent the $i$th user's rating for the $j$th item. Writing $X$ as $UV^T$ for $U \in \RR^{m \times r}$ and $V \in \RR^{d \times r}$, each entry $X_{i,j}$ can be written $\langle u_i, v_j \rangle$ where $u_i, v_j \in \RR^r$ are the $i$th and $j$th rows of $U$ and $V$. Then, we can think of $u_i$ as representing the $i$th user's preferences along $r$ attributes, $v_i$ as representing the $j$th item's $r$ attributes, and the rating $X_{ij}$ as their inner product. We'll informally refer to $U$ as row (i.e.\ user) factors and $V$ as column (i.e.\ item) factors (note that this decomposition of $X$ into $U$ and $V$ is not unique). 

Then, in this example, we can write 
\begin{align*}
    \Theta^* &= \frac{1}{m}X^TX = V\bar SV^T,
\end{align*}
where $\bar S$ is given by $\bar S = \frac{1}{m} \sum_{i = 1}^m u_i u_i^T \in \RR^{r \times r}$. Written this way, the $(j_1, j_2)$the entry of $\Theta^*$ is given by $v_{j_1}^T\bar Sv_{j_2}$, which is the inner product (induced by $\bar S$) between the $j_1$th and $j_2$th column factors. Therefore, we can interpret our goal of recovering $\Theta^*$ as recovering a matrix of pairwise ``column factor similarities.''

Why might recovering pairwise similarities be possible? For each pair of observations $X_{i, a(i)} = \langle u_i, v_{a(i)}\rangle$ and $X_{i, b(i)} = \langle u_i, v_{b(i)} \rangle$, while we don't know $u_i$, the two observations should on average be similar if the inner product $\langle v_{a(i)}, v_{b(i)} \rangle_{\bar S}$ is positive, and dissimilar if the inner product is negative. Then, using the product $X_{i, a(i)} X_{i, b(i)}$ as our empirical observation for the similarity between $a(i)$ and $b(i)$, our estimator involves optimizing the following squared loss with a nuclear norm regularizer:\footnote{Note that this program is convex and can be solved via semi-definite programming as is standard in matrix completion. In our experiments, we instead do non-convex gradient descent for computational efficiency, as discussed in Section~\ref{sec:experiments}.}
\begin{align}
    \hat \Theta &\in \argmin_{\| \Theta \|_\text{max} \leq \alpha} \cL(\Theta) + \lambda \| \Theta \|_\text{nuc},\label{eqn:program-main} \\
    \cL(\Theta) &= \frac{1}{4m} \sum_{i=1}^m \Big[ (\Theta_{a(i), b(i)} - X_{i, a(i)} X_{i, b(i)})^2\nonumber \\
    &+ (\Theta_{b(i), a(i)} - X_{i, b(i)} X_{i, a(i)})^2\nonumber \\
    &+ (\Theta_{a(i), a(i)} - X_{i, a(i)}^2)^2\nonumber \\
    &+ (\Theta_{b(i), b(i)} - X_{i, b(i)}^2)^2 \Big].\label{eqn:loss-main}
\end{align}
With this estimator, we have the following error bound:
\begin{thm}[Main result]\label{thm:main-maintext}
Let $\hat \Theta$ be the solution of the optimization problem defined in Equation~\ref{eqn:program-main}, where $\lambda$ is set to $16 \alpha \sqrt{\frac{ \log d + \delta}{dm}}$. Also, suppose that $X$ is rank $r$ with $\| X \|_\text{max}^2 \leq \alpha$, and $m \geq d(\log d + \delta)$. Then, with probability $\geq 1 - 3e^{-\delta}$, we have that
\begin{align*}
    \frac{1}{d^2} \| \hat \Theta - \Theta^* \|_F^2 \lesssim \alpha^2 \frac{rd(\log d + \delta)}{m}.
\end{align*}
\end{thm}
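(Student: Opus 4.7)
I would follow the standard nuclear-norm regularized matrix-completion analysis in the style of Negahban--Wainwright, adapted to the ``product of two observations'' noise model. Let $\Delta = \hat\Theta - \Theta^*$. Starting from the basic inequality $\mathcal{L}(\hat\Theta) + \lambda\|\hat\Theta\|_\text{nuc} \leq \mathcal{L}(\Theta^*) + \lambda\|\Theta^*\|_\text{nuc}$, the identity $(x-y)^2 - (z-y)^2 = (x-z)^2 + 2(x-z)(z-y)$ applied to each of the four squared terms in $\mathcal{L}$ rewrites the difference of losses as
\begin{align*}
\mathcal{L}(\hat\Theta) - \mathcal{L}(\Theta^*) = Q(\Delta) + \langle \Delta, Z\rangle,
\end{align*}
where $Q(\Delta) = \frac{1}{4m}\sum_i\bigl(\Delta_{a(i),b(i)}^2 + \Delta_{b(i),a(i)}^2 + \Delta_{a(i),a(i)}^2 + \Delta_{b(i),b(i)}^2\bigr)$ is an empirical quadratic form and $Z = \frac{1}{2m}\sum_i W_i$ is a random noise matrix, each $W_i$ supported on the entries determined by $(a(i),b(i))$ and built from the residuals $\Theta^*_{\cdot,\cdot} - X_{i,\cdot}X_{i,\cdot}$.

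\textbf{Controlling the stochastic term.} The first main step is to show $\|Z\|_\text{op} \leq \lambda/4$ with probability $\geq 1 - e^{-\delta}$, so that duality $|\langle \Delta, Z\rangle| \leq \|\Delta\|_\text{nuc}\|Z\|_\text{op}$ lets $Z$ be absorbed into the regularizer. The $W_i$ are not individually mean-zero, but their expectations sum to zero across rows because $\sum_i X_{i,j_1}X_{i,j_2} = m\Theta^*_{j_1,j_2}$; subtracting these conditional means therefore leaves $\sum_i W_i$ unchanged and produces independent mean-zero summands to which matrix Bernstein applies. Each $W_i$ has operator norm $\lesssim \alpha$ (since $|X_{ij}|\leq\sqrt{\alpha}$), and a direct computation using the uniform sampling of indices in $[d]$ gives matrix variance $\lesssim m\alpha^2/d$. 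Matrix Bernstein then yields $\|Z\|_\text{op}\lesssim \alpha\sqrt{(\log d+\delta)/(dm)}$, matching the chosen $\lambda$ up to the constant $16$.

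\textbf{Restricted strong convexity --- the main obstacle.} The more delicate step is to relate $Q(\Delta)$ back to $\|\Delta\|_F^2/d^2$. In expectation, $\mathbb{E}[Q(\Delta)] \asymp \|\Delta\|_F^2/d^2$, and what I need is a uniform lower bound of the form $Q(\Delta) \gtrsim \|\Delta\|_F^2/d^2 - \mathrm{slack}\cdot\|\Delta\|_\text{nuc}^2$ valid on the max-norm ball $\|\Delta\|_\text{max}\leq 2\alpha$, where the slack shrinks like $1/m$ up to a $d\log d$ factor. I would prove this by the symmetrization-plus-contraction template: the empirical process $\sup_{\|\Delta\|_\text{nuc}\leq t}|Q(\Delta)-\mathbb{E}[Q(\Delta)]|$ is controlled via Rademacher complexity, squaring is handled by contraction (the map $x\mapsto x^2$ is $O(\alpha)$-Lipschitz on $[-2\alpha,2\alpha]$), and the resulting Rademacher term reduces to the operator norm of $\frac{1}{m}\sum_i \sigma_i E_{a(i),b(i)}$, which is again bounded by matrix Bernstein. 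The hypothesis $m\gtrsim d(\log d+\delta)$ enters here to ensure the slack term does not overwhelm the main $\|\Delta\|_F^2/d^2$ contribution once the standard cone bound $\|\Delta\|_\text{nuc}\lesssim\sqrt r\|\Delta\|_F$, which follows from a tangent-space decomposition of $\Delta$ at the rank-$r$ matrix $\Theta^*$ together with $\|Z\|_\text{op}\leq\lambda/4$, has been applied.

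\textbf{Combining.} Putting the pieces together, the basic inequality yields the slow-rate bound $Q(\Delta) \lesssim \lambda\|\Delta\|_\text{nuc} \lesssim \lambda\sqrt r\,\|\Delta\|_F$, while restricted strong convexity gives $Q(\Delta) \gtrsim \|\Delta\|_F^2/d^2$ after absorbing the slack via the cone bound. Combining, $\|\Delta\|_F \lesssim d^2\lambda\sqrt r$, and squaring then dividing by $d^2$ produces the claimed $\|\Delta\|_F^2/d^2 \lesssim d^2\lambda^2 r \asymp \alpha^2 rd(\log d+\delta)/m$. A union bound over the two high-probability events (operator norm of $Z$ and the uniform deviation in RSC) produces the factor $3e^{-\delta}$.
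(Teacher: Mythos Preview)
Your proposal follows the same Negahban--Wainwright framework as the paper, with the same three ingredients (operator-norm bound on $\nabla\mathcal{L}(\Theta^*)$, restricted strong convexity for $Q(\Delta)$, and the cone condition $\|\Delta\|_\text{nuc}\lesssim\sqrt r\,\|\Delta\|_F$) assembled in the same way. The one substantive difference is in how you control $\|Z\|_\text{op}$. The paper's summands $W_i$ are not individually mean-zero in the sampling randomness, and the paper handles this by a permutation-symmetrization argument: since the mask is independent of $X$, the law of $Z$ is invariant under permuting the rows of $U$, which lets one insert a random permutation, pass to Rademacher averages, and only then invoke matrix Bernstein. Your centering trick---subtract each $\mathbb{E}_{a(i),b(i)}[W_i]$ and note these row-dependent centers sum to zero because $\sum_i X_{i,j_1}X_{i,j_2}=m\Theta^*_{j_1,j_2}$---is a valid and more direct route to the same $\alpha\sqrt{(\log d+\delta)/(dm)}$ rate; the paper's route, by contrast, makes explicit exactly where the random-masking assumption is used.

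One slip to fix: the RSC slack you write as $\mathrm{slack}\cdot\|\Delta\|_\text{nuc}^2$ should be \emph{linear} in $\|\Delta\|_\text{nuc}$. Your own template already gives this---contraction of $x\mapsto x^2$ contributes a factor $\alpha$, and then H\"older gives $\|\Delta\|_\text{nuc}\cdot\bigl\|\tfrac{1}{m}\sum_i\sigma_i\tilde E_{a(i),b(i)}\bigr\|_\text{op}$, yielding a slack of order $\alpha\,\|\Delta\|_\text{nuc}\sqrt{\log d/(dm)}$, exactly as in the paper. Taken literally, the quadratic form you wrote would, after the cone bound $\|\Delta\|_\text{nuc}\lesssim\sqrt r\,\|\Delta\|_F$, require $m\gtrsim \alpha^2 r d^2\log d$ to absorb the slack into $\|\Delta\|_F^2/d^2$, and the argument would not close at the claimed rate. (A smaller accounting point: the paper proves the RSC bound separately for diagonal and off-diagonal entries via a peeling argument, giving two high-probability events there plus one for the operator norm---hence the $3e^{-\delta}$ rather than the $2e^{-\delta}$ your two-event count would suggest.)
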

From this theorem, we can derive the following two corollaries: first, because $X$ and $\Theta^* = \frac{1}{m} X^TX$ have the same rowspace, we can use recovery of $\Theta^*$ to estimate the rowspace of $X$. As is standard, we can measure rowspace recovery error as the error in estimating the right-side singular vectors up to rotation, producing the following:
\begin{cor}[Right-side singular vector recovery]\label{cor:rowspace-maintext}
    Under the same conditions as Theorem~\ref{thm:main-maintext}, let $Q \in \RR^{d \times r}$ denote the right-side singular vectors of $X$, and let $\hat Q \in \RR^{d \times r}$ be the top $r$ singular vectors of $\hat \Theta$. Then, letting $\sigma_r$ be the $r$th singular value of $\Theta^* = \frac{1}{m}X^TX$, we have
    \begin{align*}
        \min_{R \in \RR^{r \times r}: R^TR = I_r} \| \hat Q R - Q \|_F^2 \lesssim \left(\frac{d \alpha}{\sigma_r}\right)^2 \frac{rd(\log d + \delta)}{m}.
    \end{align*}
\end{cor}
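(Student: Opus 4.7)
The plan is to deduce the corollary from Theorem~\ref{thm:main-maintext} via a standard Davis--Kahan/Wedin type argument, using the fact that $\Theta^*$ is exactly rank $r$ so the spectral gap at level $r$ equals $\sigma_r$ itself. First I would set up the perturbation framework: write $\hat\Theta = \Theta^* + E$ where $E := \hat\Theta - \Theta^*$, and observe that $\Theta^* = \tfrac{1}{m}X^TX$ is symmetric PSD with eigenvectors equal to $Q$ (up to the rotation ambiguity) and eigenvalues $\sigma_1 \geq \cdots \geq \sigma_r > \sigma_{r+1} = 0$. Since $\hat Q$ is defined as the top $r$ singular vectors of $\hat\Theta$, we can (symmetrizing $\hat\Theta$ first if necessary, which only halves the perturbation and is justified because $\cL$ treats $(a(i),b(i))$ and $(b(i),a(i))$ symmetrically) treat $\hat Q$ as the top $r$ eigenvectors of a symmetric matrix close to $\Theta^*$.

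Next I would apply the Davis--Kahan $\sin\Theta$ theorem in Frobenius norm: for the rank-$r$ subspaces spanned by $Q$ and $\hat Q$,
\begin{align*}
\|\sin\Theta(Q,\hat Q)\|_F \;\leq\; \frac{\|E\|_F}{\sigma_r - \sigma_{r+1}} \;=\; \frac{\|\hat\Theta - \Theta^*\|_F}{\sigma_r}.
\end{align*}
Then I would invoke the standard inequality that relates the orthogonal Procrustes distance to the principal angles,
\begin{align*}
\min_{R^TR = I_r} \|\hat Q R - Q\|_F^2 \;\leq\; 2\,\|\sin\Theta(Q,\hat Q)\|_F^2,
\end{align*}
so that
\begin{align*}
\min_{R^TR = I_r} \|\hat Q R - Q\|_F^2 \;\lesssim\; \frac{\|\hat\Theta - \Theta^*\|_F^2}{\sigma_r^2}.
\end{align*}

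Finally I would plug in Theorem~\ref{thm:main-maintext}, which on the event of probability $\geq 1 - 3e^{-\delta}$ gives $\|\hat\Theta - \Theta^*\|_F^2 \lesssim d^2\alpha^2 \cdot \tfrac{rd(\log d + \delta)}{m}$. Substituting yields
\begin{align*}
\min_{R^TR = I_r} \|\hat Q R - Q\|_F^2 \;\lesssim\; \frac{d^2\alpha^2}{\sigma_r^2}\cdot \frac{rd(\log d + \delta)}{m} \;=\; \left(\frac{d\alpha}{\sigma_r}\right)^2 \frac{rd(\log d + \delta)}{m},
\end{align*}
which is the claimed bound. There is no serious obstacle here: the work of the paper lives entirely in Theorem~\ref{thm:main-maintext}, and this corollary is a black-box consequence of spectral perturbation theory. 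The only mild point of care is ensuring that ``top singular vectors of $\hat\Theta$'' can be interpreted through the symmetric framework required by Davis--Kahan; since $\Theta^*$ is symmetric and the loss is symmetric in its two orderings, replacing $\hat\Theta$ by $(\hat\Theta + \hat\Theta^T)/2$ if necessary loses nothing in the Frobenius bound.
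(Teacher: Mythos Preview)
Your proposal is correct and follows essentially the same approach as the paper: apply a Davis--Kahan type perturbation bound (the paper cites the Yu--Wang--Samworth variant, which directly yields $\min_R\|\hat QR - Q\|_F^2 \le 8\|\hat\Theta - \Theta^*\|_F^2/\sigma_r^2$, while you route through $\sin\Theta$ and then convert to the Procrustes distance) and then substitute Theorem~\ref{thm:main-maintext}. Your remark about symmetrizing $\hat\Theta$ is a reasonable bit of hygiene that the paper leaves implicit, since the loss, constraint, and regularizer are all transpose-invariant.
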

In this corollary, a $\sigma_r$ factor appears in the denominator because the algorithm is tasked with recovering all $r$ singular directions, even if some have low weight (i.e.\ $\sigma_r$ is small). However, in many applications, we often care only about recovering the singular directions with high weight, as the low weight singular directions have little effect on the data. Therefore, from Theorem~\ref{thm:main-maintext} we can also derive a column (i.e.\ right-side) factor recovery result, which can be thought of as a weighted version of the rowspace recovery result:
\begin{cor}[Column factor recovery]\label{cor:colfactor-maintext}
    Under the same conditions as Theorem~\ref{thm:main-maintext}, let $\hat Q \in \RR^{d \times r}$ and $\hat \Lambda \in \RR^{r \times r}$ be the top $r$ singular vectors and singular values of $\hat \Theta$. Also, let $\Theta^* = Q \Lambda Q^T$ be the SVD of $\Theta^* = \frac{1}{m}X^TX$, where $Q \in \RR^{d \times r}$ and $\Lambda \in \RR^{r \times r}$. Then, we have
    \begin{align*}
        \min_{\substack{R \in \RR^{r\times r}:\\ R^TR = I_r}} \frac{1}{d} \| \hat Q \hat \Lambda^{1/2} R - Q \Lambda^{1/2} \|_F^2 &\lesssim \alpha \sqrt{\frac{r^2 d(\log d + \delta)}{m}}.
    \end{align*}
\end{cor}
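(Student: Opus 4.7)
The plan is to derive the weighted column-factor bound from the Frobenius bound on $\hat\Theta - \Theta^*$ in Theorem~\ref{thm:main-maintext}, using three standard tools: a Powers--Stormer-type inequality to control the Procrustes error by a nuclear norm, a rank-$2r$ bound to convert nuclear norm back to Frobenius norm at cost $\sqrt{2r}$, and Eckart--Young to relate $\hat\Theta$ to its best rank-$r$ approximation.

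Write $\hat U = \hat Q\hat\Lambda^{1/2}$ and $U = Q\Lambda^{1/2}$, so that $A := \hat U\hat U^T = \hat Q\hat\Lambda\hat Q^T$ and $B := UU^T = \Theta^*$ are both PSD of rank at most $r$. A standard orthogonal-Procrustes calculation gives $\min_R \|\hat U R - U\|_F^2 = \text{tr}(A) + \text{tr}(B) - 2\|\hat U^T U\|_\text{nuc}$; combined with Ando's trace inequality in the form $\text{tr}(A^{1/2}B^{1/2}) \leq \text{tr}((A^{1/2}BA^{1/2})^{1/2}) = \|\hat U^T U\|_\text{nuc}$, this yields $\min_R \|\hat U R - U\|_F^2 \leq \|A^{1/2} - B^{1/2}\|_F^2$. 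The Powers--Stormer inequality $\|A^{1/2} - B^{1/2}\|_F^2 \leq \|A - B\|_\text{nuc}$ for PSD $A, B$ then produces the gap-free bound $\min_R \|\hat U R - U\|_F^2 \leq \|\hat Q\hat\Lambda\hat Q^T - \Theta^*\|_\text{nuc}$. Since this difference has rank at most $2r$, it is bounded by $\sqrt{2r}\,\|\hat Q\hat\Lambda\hat Q^T - \Theta^*\|_F$. Assuming WLOG that $\hat\Theta$ is symmetric (the loss in Equation~\ref{eqn:loss-main} and the nuclear-norm penalty are invariant under symmetrization), $\hat Q\hat\Lambda\hat Q^T$ is the best rank-$r$ Frobenius approximation of $\hat\Theta$, so Eckart--Young gives $\|\hat Q\hat\Lambda\hat Q^T - \hat\Theta\|_F \leq \|\Theta^* - \hat\Theta\|_F$, and the triangle inequality yields $\|\hat Q\hat\Lambda\hat Q^T - \Theta^*\|_F \leq 2\|\hat\Theta - \Theta^*\|_F$. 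Plugging Theorem~\ref{thm:main-maintext}'s bound $\|\hat\Theta - \Theta^*\|_F \lesssim \alpha d\sqrt{rd(\log d + \delta)/m}$ into the chain gives $\min_R \|\hat U R - U\|_F^2 \lesssim \sqrt{r}\cdot\alpha d\sqrt{rd(\log d + \delta)/m} = \alpha d\sqrt{r^2 d(\log d + \delta)/m}$, and dividing by $d$ recovers the corollary.

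The critical non-routine ingredient is the Powers--Stormer step, which avoids the $\sigma_r^{-1}$ factor appearing in the Davis--Kahan approach used for Corollary~\ref{cor:rowspace-maintext}; the $\sqrt r$ factor that replaces it is the price paid for converting from nuclear norm to Frobenius norm. The only technical subtlety is in applying Eckart--Young: one needs the top $r$ singular vectors of $\hat\Theta$ to coincide with its top $r$ eigenvectors, which requires the corresponding eigenvalues to be non-negative. In the regime where $\|\hat\Theta - \Theta^*\|_F < \sigma_r$ this follows from Weyl's inequality; otherwise the argument can be carried out after first projecting $\hat\Theta$ onto the PSD cone, which only decreases the Frobenius error against the PSD target $\Theta^*$.
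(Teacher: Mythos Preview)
Your proposal is correct and follows essentially the same route as the paper's proof: Powers--St{\o}rmer to pass from the square-root factors to $\|\hat Q\hat\Lambda\hat Q^T - \Theta^*\|_\text{nuc}$, the rank-$2r$ bound to get Frobenius norm, Eckart--Young plus triangle to reach $2\|\hat\Theta - \Theta^*\|_F$, and then Theorem~\ref{thm:main-maintext}. The only cosmetic difference is the order: the paper first applies Powers--St{\o}rmer to $\hat Q\hat\Lambda^{1/2}\hat Q^T$ and $Q\Lambda^{1/2}Q^T$ and afterward shows $\min_R\|\hat Q\hat\Lambda^{1/2}R - Q\Lambda^{1/2}\|_F^2 \leq \|\hat Q\hat\Lambda^{1/2}\hat Q^T - Q\Lambda^{1/2}Q^T\|_F^2$ by picking the explicit rotation $R = AB^T$ from the SVD of $\hat\Lambda^{1/2}\hat Q^TQ\Lambda^{1/2}$ and using $\text{tr}(C) \leq \|C\|_\text{nuc}$; you establish the same inequality first via the Procrustes formula and the bound $\text{tr}(A^{1/2}B^{1/2}) \leq \|A^{1/2}B^{1/2}\|_\text{nuc}$ (which is the same elementary fact, so invoking Ando is heavier than needed but not wrong). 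You are also more explicit than the paper about the symmetry/PSD subtlety needed for $\hat Q\hat\Lambda\hat Q^T$ to actually be the best rank-$r$ approximation of $\hat\Theta$; the paper glosses over this point.
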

We refer to this corollary as ``column factor recovery'' because in the users and items example, we can think of the corollary as using our estimate of $\Theta^* = V \bar S V^T$ to recover $V \bar S^{1/2} \in \RR^{d \times r}$, or the column factors ``skewed'' by $\bar S$. 

\subsection{Interpreting the bounds}\label{subsec:interpreting}

While the error metrics in Theorem~\ref{thm:main-maintext} and Corollary~\ref{cor:colfactor-maintext} scale with the magnitude of $X$, the rowspace recovery error in Corollary~\ref{cor:rowspace-maintext} is scale-invariant. Therefore, while Theorem~\ref{thm:main-maintext} and Corollary~\ref{cor:colfactor-maintext} scale with the maximum entry $\alpha$, the bound in Corollary~\ref{cor:rowspace-maintext} is in terms of the scale-invariant quantity $\frac{d \alpha}{\sigma_r}$. To make the bounds more comparable, we can convert these additive bounds into multiplicative ones by dividing both sides by the norm of the quantity being estimated ($\| \Theta^* \|_F^2$ in Theorem~\ref{thm:main-maintext}, $\| Q \|_F^2 = r$ in Corollary~\ref{cor:rowspace-maintext}, and $\| Q \Lambda^{1/2} \|_F^2 = \|\Theta^* \|_\text{nuc}$ in Corollary~\ref{cor:colfactor-maintext}). Then, we have the following sample complexities:
\begin{align*}
    \text{Theorem~\ref{thm:main-maintext}: } m &\gtrsim \left(\frac{d \alpha}{\| \Theta^* \|_F}\right)^2 r d \log d \\
    \text{Corollary~\ref{cor:rowspace-maintext}: } m &\gtrsim \left(\frac{d \alpha}{\sigma_r(\Theta^*) \sqrt{r} }\right)^2 r d \log d \\
    \text{Corollary~\ref{cor:colfactor-maintext}: } m &\gtrsim \left(\frac{d \alpha \sqrt{r}}{\| \Theta^* \|_\text{nuc}}\right)^2 r d \log d.
\end{align*}
Each of these scale-invariant constants can be thought of as incoherence constants which capture the ``spikiness'' of the matrix $\Theta^*$, with similar constants appearing in \citet{negahban2012restricted}, \citet{montanari-sun-2018}, and \citet{cai2021subspace} (among others). Letting $\mu_1 = \frac{d \alpha}{\| \Theta^* \|_F}$, $\mu_2 = \frac{d \alpha}{\sigma_r(\Theta^*) \sqrt{r} }$, and $\mu_3 = \frac{d \alpha \sqrt{r}}{\| \Theta^* \|_\text{nuc}}$, we can interpret these constants by looking at a few examples:
\begin{itemize}
    \item \textbf{All ones matrix}: if $X$ is the all ones matrix, then we have $\mu_1 = \mu_2 = \mu_3 = 1$.
    \item \textbf{Single zero matrix}: if $X$ is $1$ everywhere, except with a $0$ in entry $(1,1)$, then we have $\mu_1$ and $\mu_3$ constant, while $\mu_2$ is order $md$. In other words, for approximate recovery of $\Theta^*$ and the column factors, predicting all ones is sufficient. On the other hand, to recover the rowspace, the algorithm must know where the $0$ is, which requires sampling entry $(1,1)$ and is impossible to do with high probability.
    \item \textbf{Gaussian factors} ($X = UV^T$ with $U, V \simiid \mathcal{N}(0,1)$, $U \in \RR^{m \times r}$, $V \in \RR^{d \times r}$): In this example, $\Theta^* = V \bar S V^T \approx V V^T$ because the empirical covariance $\bar S = \frac{1}{m} \sum_{i=1}^m u_i u_i^T$ is close to the identity $I_r$. Then, up to log and constant factors, we have\footnote{For $r \ll d$, we have (up to constant and log factors) that $\alpha \approx r$, $\| \Theta^* \|_F \approx d \sqrt{r}$, $\sigma_r(\Theta^*) \approx d$, and $\| \Theta^* \|_\text{nuc} \approx r d$.} $\mu_1 \approx \mu_2 \approx \mu_3 \approx \sqrt{r}$, producing a sample complexity of $m \gtrsim r^2 d \log d$. We observe this $r^2$ dependence in synthetic experiments and suspect that it is fundamental.
    \item \textbf{Correlated Gaussian factors}: while the rows of $U$ and $V$ in the previous example were uncorrelated Gaussians, we can also consider the case where they are drawn according to $\mathcal{N}(0, C)$ for some covariance matrix $C \in \RR^{r \times r}$. In this case, $\mu_1$ scales roughly as $\text{tr}(C^2) / \sqrt{\text{tr}(C^4)}$, which is $\sqrt{r}$ if $C$ is the identity but potentially smaller if the eigenvalues of $C$ are non-uniform.\footnote{Let $Z_1 \in \RR^{m \times r}$ and $Z_2 \in \RR^{d \times r}$ have entries drawn i.i.d.\ $\mathcal{N}(0,1)$. Then, for $m \gg r$, we have that $X = Z_1 C Z_2^T$ and $\Theta^* \approx Z_2 C^2 Z_2^T$. The entries of $X$ have mean magnitude $\sqrt{\text{tr}(C^2)}$, and the off-diagonal entries of $\Theta^*$ have mean magnitude $\sqrt{\text{tr}(C^4)}$, with both having sub-Exponential concentration. Then, up to log factors, $\mu_1 = d \alpha / \| \Theta^* \|_F \approx \text{tr}(C^2) / \sqrt{\text{tr}(C^4)}$.} For example, letting $s = (s_1, ..., s_r)$ denote the eigenvalues of $C^2$, if the $s_i$ decay according to a power law $s_i = c_0 i^\alpha$, then $\mu_1$ scales as $\log r$ for $\alpha = 1$ and is constant for $\alpha > 1$. In other words, the sample complexity to recover $\Theta^*$ is $m \gtrsim r d \log d$ if the $r$ factors are sufficiently correlated.
\end{itemize}

\section{Warmup: Gaussian row factors}\label{sec:warmup}

In this section, we warm up with a simpler setting to provide intuition. In particular, suppose that the row factors $u_1, ..., u_m \in \RR^r$ are drawn i.i.d.\ from a standard Gaussian $\mathcal{N}(0, I_r)$. Then, for a pair of observations $X_{i,a(i)}$ and $X_{i, b(i)}$, we have the following expectations:
\begin{align*}
    \EE [X_{i,a(i)} X_{i,b(i)}] &= \EE [u_i^T v_{a(i)} u_i^T v_{b(i)}] \\
    &= v_{a(i)}^T \EE [u_iu_i^T] v_{b(i)} \\
    &= v_{a(i)}^Tv_{b(i)}, \\
    \EE [X_{i,a(i)}^2] &= \| v_{a(i)} \|_2^2.
\end{align*}
Then, each pair of observations gives us unbiased estimates of entries $(a(i), b(i))$, $(b(i), a(i))$, $(a(i), a(i))$, and $(b(i), b(i))$ in the pairwise similarity matrix $VV^T \in \RR^{d \times d}$. Given a very large number of rows, we can then estimate each entry of $VV^T$ as its empirical average:
\begin{align*}
    \text{Off-diagonal } &\text{terms ($j_1 \neq j_2$):} \\
    \hat \Theta_{j_1, j_2}^\text{(emp)} &= \frac{1}{|S_{j_1, j_2}|} \sum_{i \in S_{j_1, j_2}} X_{i, j_1} X_{i, j_2}, \\
    S_{j_1, j_2} &= \{i: (a(i), b(i)) = (j_1, j_2) \text{ or } (j_2, j_1)\}. \\
    \text{Diagonal } &\text{terms ($j_1 = j_2$):} \\
    \hat \Theta_{j, j}^\text{(emp)} &= \frac{1}{|S_{j}|} \sum_{i \in S_{j}} X_{i, j}^2, \\
    S_{j} &= \{i: a(i) = j \text{ or } b(i) = j\}.
\end{align*}
This empirical average can also be written
\begin{align*}
    \hat \Theta^\text{(emp)} &= [P_E(X)^TP_E(X)] \oslash [E^TE],
\end{align*}
where $E \in \RR^{m\times d}$ is the observation mask (1 if that entry is observed, 0 otherwise), $P_E$ sets unobserved entries to zero, and $\oslash$ represents element-wise division for entries where the divisor is non-zero, and a no-op otherwise. In other words, the empirical average $\hat \Theta$ can be written as a renormalized version of $P_E(X)^TP_E(X)$, where the $(j_1, j_2)$th entry of the renormalization matrix $E^TE$ is the number of rows where $j_1$ and $j_2$ are both observed.

If we don't have a very large number of rows, then $\hat \Theta^\text{(emp)}$ can be thought of as a noisy, sparsely populated version of the column factor similarity matrix $VV^T$, which is rank $r$. Then a natural algorithm to estimate $VV^T$ is to minimize the squared loss between the $\hat \Theta$ and $\hat \Theta^\text{(emp)}$, plus a nuclear norm regularizer $\| \hat \Theta \|_\text{nuc}$. Furthermore, we might weight each entry in the squared loss by the number times it was observed, producing the objective
\begin{align*}
    \min_\Theta \| \Theta - \Theta^\text{(emp)} \|_{E^TE}^2 + \lambda \| \Theta \|_\text{nuc},
\end{align*}
where the weighted loss $\| \Theta - \Theta^\text{(emp)} \|_{E^TE}^2$ is given by
\begin{align}\label{eqn:loss-reinterpretation}
    &\| \Theta - \Theta^\text{(emp)} \|_{E^TE}^2 \nonumber \\
    &\qquad = \sum_{j_1 = 1}^d \sum_{j_2 = 1}^d (E^TE)_{j_1, j_2} (\Theta_{j_1, j_2} - \Theta^\text{(emp)}_{j_1, j_2})^2 \nonumber \\
    &\qquad = \text{tr}(\Theta^TE^TE\Theta^\text{(emp)}).
\end{align}
In fact, some calculation shows that this weighted loss is exactly the loss we defined originally in Equation~\ref{eqn:loss-main}, up to rescaling and removal of terms that don't depend on $\Theta$. Therefore, we can interpret our algorithm as performing weighted matrix completion with respect to $\hat \Theta^\text{(emp)}$, which is a properly renormalized version of $P_E(X)^TP_E(X)$.

From this warmup, one can imagine ways to prove noisy matrix completion error bounds for recovery of $VV^T$ given i.i.d.\ Gaussian $u_i$'s, as well as extensions to more general distributions (e.g.\ sub-Gaussian or sub-Exponential), where we instead recover $V \text{Cov}(u) V^T$ where $\text{Cov}(u)$ is the covariance matrix of $u$. However, it is not always reasonable to assume independently drawn row factors: for example, in the genomics case, the chromosome base pairs are certainly not random or independent of one another. Therefore, we would like to prove recovery results without making distributional assumptions on the row factors. 

However, recall from the introduction that the problem is severely underconstrained if we allow arbitrary row factors and masking. Somewhat surprisingly, we find that even with arbitrary row factors, assuming random masking is enough to enable recovery: while the intuition about noisy empirical averages no longer holds, the randomness in the masking is enough to enable a key technical step in the proof involving Radamacher symmetrization (Section~\ref{subsec:opnorm}). In the following section, we provide a sketch of the proof in this more general setting.

\section{Proof sketch}\label{sec:proofsketch}

\begin{figure*}[t]
\begin{center}
    \includegraphics[width=\linewidth]{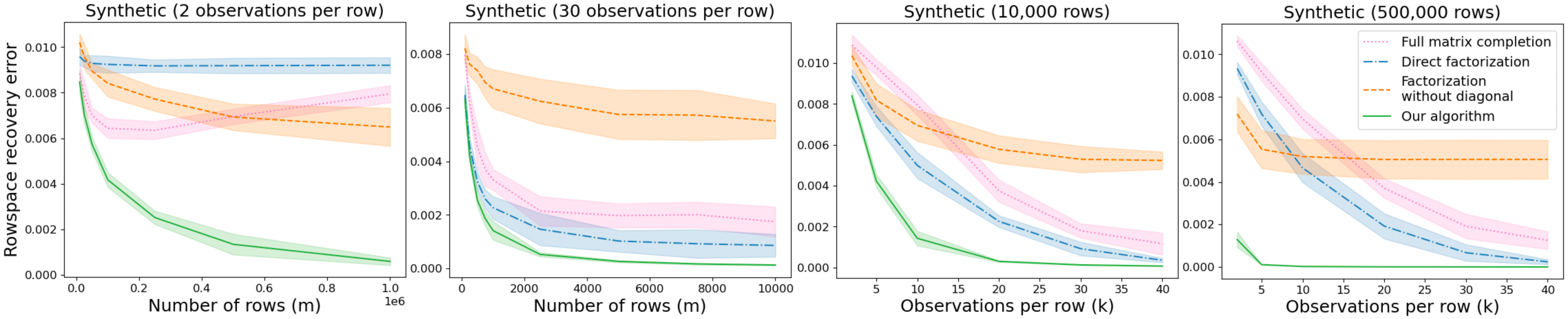}
\end{center}
\caption{\label{figure:synthetic-plot} Rowspace recovery for rank-25 i.i.d.\ Gaussian column and row factors, with $d = 100$ columns, where from left to right, the experiments are as follows: (a) we sample $k=2$ observations per row and vary the number of rows $m$ from $10{,}000$ to $1{,}000{,}000$, (b) $k=30$ with $m$ between $100$ and $10{,}000$ (where fewer rows are necessary because $k$ is large), (c) $m = 10{,}000$ with $k$ between $2$ and $40$, and (d) $m = 500{,}000$ with $k$ between $2$ and $40$. Our algorithm (in solid green) performs produces the most accurate rowspace estimates for all parameter ranges evaluated on, with the gap being largest for large $m$ and small $k$. }
\end{figure*}

\subsection{Outline}
In this section, we outline of the proof, which uses restricted strong convexity arguments~\citep{negahban2012restricted,negahban2012unified}. The proof proceeds as follows: letting $\Delta$ denote the error $\hat \Theta - \Theta^*$, by the optimality of $\hat \Theta$ (along with reverse triangle) we have
\begin{align*}
    0 &\geq \cL(\Theta^* + \Delta) - \cL(\Theta^*) + \lambda ( \| \Theta^* + \Delta \|_\text{nuc} - \| \Theta^* \|_\text{nuc} ) \\
    &\geq \cL(\Theta^* + \Delta) - \cL(\Theta^*) - \lambda \| \Delta \|_\text{nuc}.
\end{align*}
If $\cL$ were $\tau$-strongly-convex, then we could write 
\begin{align*}
    &\geq \langle \nabla \cL(\Theta^*), \Delta \rangle + \tau \| \Delta \|_F^2  - \lambda \| \Delta \|_\text{nuc} \\
    &\geq - \| \nabla \cL(\Theta^*) \|_\text{op} \| \Delta \|_\text{nuc} + \tau \| \Delta \|_F^2  - \lambda \| \Delta \|_\text{nuc},
\end{align*}
where the second inequality follows from Holder's inequality. Then, bounding $\| \nabla \cL(\Theta^*) \|_\text{op}$ and $\| \Delta \|_\text{nuc}$ and rearranging would result in a bound for the error $\| \Delta \|_F^2$. However, $\cL$ is not strongly convex: if we do not restrict to low-rank matrices, then there are multiple matrices that agree with the incomplete observations. Therefore, we'll instead show a form of restricted strong convexity: in particular, we'll show (in Lemma~\ref{lem:rsc}) that the quantity $\cL(\Theta^* + \Delta) - \cL(\Theta^*) - \langle \nabla \cL(\Theta^*), \Delta \rangle$ concentrates around $\frac{1}{d^2} \| \Delta \|_F^2$, but with deviation terms depending on $\| \Delta \|_\text{nuc}$ and $\| \Delta \|_\text{max}$. Therefore, recovery will depend on $\| \Delta \|_\text{nuc}$ being small, which follows from the condition that the regularization strength $\lambda$ is large enough (Lemma~{1} of \citet{negahban2012restricted}), and $\| \Delta \|_\text{max}$ being small, which follows by our assumption that $\| X \|_\text{max}^2 \leq \alpha$ and the optimization constraint $\| \hat \Theta \|_\text{max} \leq \alpha$. With this approach in mind, the following are caricatures of each of the lemmas:
\begin{enumerate}[(a)]
    \item Section~\ref{subsec:opnorm} (operator norm bound):
    \begin{align*}
        \| \nabla \cL(\Theta^*) \|_\text{op} \lesssim \alpha \sqrt{\frac{\log d}{dm}}.
    \end{align*}
    \item Section~\ref{subsec:rsc} (restricted strong convexity): 
    \begin{align*}
        &\cL(\Theta^* + \Delta) - \cL(\Theta^*) - \langle \nabla \cL(\Theta^*), \Delta \rangle  \\
        &\qquad \geq \frac{1}{d^2} \| \Delta \|_F^2 - c \alpha \| \Delta \|_\text{nuc} \sqrt{\frac{\log d}{dm}}.
    \end{align*}
    \item Section~\ref{subsec:decomp} (decomposability): 
    \begin{align*}
        \text{If $\lambda \geq 2\| \nabla \cL (\Theta^*) \|_\text{op}$, then $\| \Delta \|_\text{nuc} \lesssim \sqrt{r} \| \Delta \|_F$.}
    \end{align*}
\end{enumerate}
With these lemmas, we can first apply restricted strong convexity, reverse triangle, and Holder's inequality as before:
\begin{align*}
    0 &\geq \cL(\Theta^* + \Delta) - \cL(\Theta^*) + \lambda ( \| \Theta^* + \Delta \|_\text{nuc} - \| \Theta^* \|_\text{nuc} ) \\
    &\geq \frac{1}{d^2} \| \Delta \|_F^2 - c \alpha \| \Delta \|_\text{nuc} \sqrt{\frac{\log d}{dm}}  -\| \nabla \cL(\Theta^*) \|_\text{op} \| \Delta \|_\text{nuc} \\
    & - \lambda \| \Delta \|_\text{nuc}.
\end{align*}
Next, by our operator norm bound (Lemma~\ref{lem:opnormbound}) and our setting of $\lambda = 16 \alpha \sqrt{\frac{\log d}{dm}}$, we can replace the latter two terms with $c\alpha \| \Delta \|_\text{nuc} \sqrt{\frac{\log d}{dm}}$ and combine, producing
\begin{align*}
    &\geq \frac{1}{d^2} \| \Delta \|_F^2 - c' \alpha \| \Delta \|_\text{nuc} \sqrt{\frac{\log d}{dm}}.
\end{align*}
Finally, applying the bound $\| \Delta \|_\text{nuc} \lesssim \sqrt{r} \| \Delta \|_F$ (Lemma~{1} of \citet{negahban2012restricted}) and rearranging produces $\frac{1}{d} \| \Delta \|_F \lesssim \alpha \sqrt{\frac{rd\log d}{m}}$, as desired. 

\begin{figure*}[t]
\begin{center}
    \includegraphics[width=\linewidth]{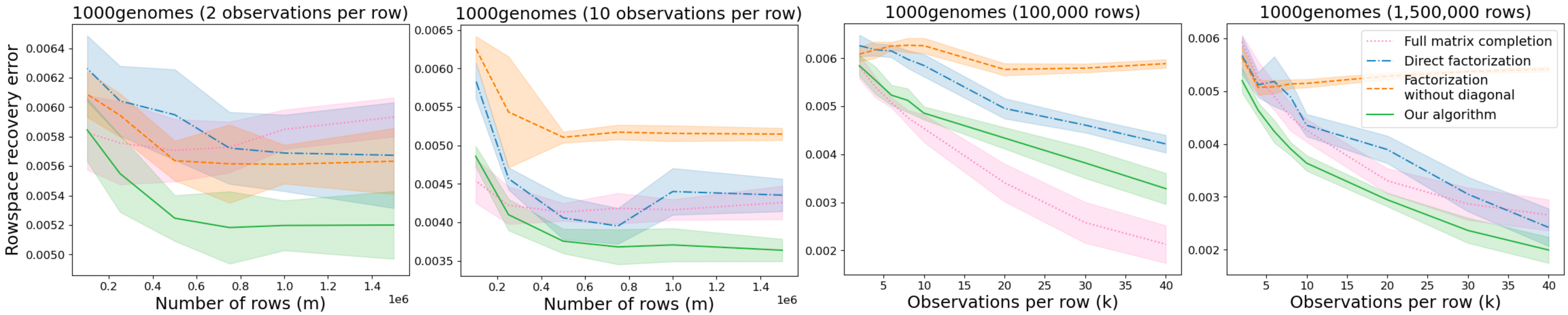}
\end{center}
\caption{\label{figure:1000genomes-plot} Rowspace recovery on the 1000genomes dataset, where from left to right the experiments are as follows: (a) we observe $k=2$ entries per row with the number of rows $m$ between $100{,}000$ and $1{,}500{,}000$, (b) $k=10$ with $m$ between $100{,}000$ and $1{,}500{,}000$, (c) $m = 100{,}000$ with $k$ between $2$ and $40$, and (d) $m = 1{,}500{,}000$ with $k$ between $2$ and $40$. The number of columns is fixed with $d = 250$ for all experiments. Our algorithm (in solid green) is most accurate for most parameter settings, but is outperformed by full matrix completion (in dotted pink) when $m$ is small and $k$ is large (plot (c)).}
\end{figure*}
 
\subsection{Operator norm bound}\label{subsec:opnormbound-main}
 It remains to discuss each of the lemmas, which we do briefly here and at length in the appendix. First, using our definition of the loss in Equation~\ref{eqn:loss-main}, we can compute the gradient of $\cL$ at $\Theta^*$ as follows (where recall that $E_{i,j}$ is the mask matrix with 1 at $(i,j)$ and 0 elsewhere):
\begin{align*}
\nabla \cL (\Theta^*) &= \frac{1}{m} \sum_{i=1}^m \frac{1}{2} (\Theta^*_{a(i), b(i)} - X_{i,a(i)}X_{i,b(i)}) E_{a(i), b(i)}  \\
&+ \frac{1}{m} \sum_{i=1}^m \frac{1}{2} (\Theta^*_{a(i), b(i)} - X_{i,a(i)}X_{i,b(i)}) E_{b(i), a(i)} \\
&+ \frac{1}{m} \sum_{i=1}^m \frac{1}{2} (\Theta^*_{a(i), a(i)} - X_{i,a(i)}^2) E_{a(i), a(i)} \\
&+ \frac{1}{m} \sum_{i=1}^m \frac{1}{2} (\Theta^*_{b(i), b(i)} - X_{i,b(i)}^2) E_{b(i), b(i)}.
\end{align*}
We wish to prove that the operator norm of this quantity is small, with high probability with respect to the randomly sampled indices $(a(i), b(i))$. In the example where the row factors $u_i$ were Gaussian, we had that $\EE_{u_i \sim Z} [X_{i,j_1}X_{i,j_2}] = \Theta^*_{j_1, j_2}$, turning each summand into a mask matrix multiplied by mean-zero noise. While that approach no longer holds here, we can write
\begin{align*}
    &X_{i,a(i)}X_{i,b(i)} - \Theta^*_{a(i), b(i)} = \\
        &\qquad v_{a(i)}^T\left(u_i u_i^T - \frac{1}{m} \sum_{i=1}^m u_i u_i^T\right)v_{b(i)}.
\end{align*}
At this point, we can apply our assumption that the mask is chosen independently of the underlying matrix, as it means that the expectation $\EE_{a(i), b(i)} \| \nabla \cL(\Theta^*) \|_\text{op}$ is invariant to permutations of the rows. Considering randomly permuted row factors brings us closer to the distributional case, which allows us to apply Radamacher symmetrization arguments (Section~\ref{subsec:opnorm}). The rest of the bound then proceeds via standard concentration arguments.

\subsection{Restricted strong convexity}
The other key lemma is restricted strong convexity; by the definition of the loss (Equation~\ref{eqn:loss-main}), we can first compute
\begin{align*}
    &\cL(\Theta^* + \Delta) - \cL(\Theta^*) - \langle \nabla \cL(\Theta^*), \Delta \rangle \\
    &\qquad = \frac{1}{m}\sum_{i=1}^m \frac{1}{2}[\Delta_{a(i), b(i)}^2 + \Delta_{b(i), a(i)}^2] \\
    &\qquad + \frac{1}{m}\sum_{i=1}^m \frac{1}{2}[\Delta_{a(i), a(i)}^2 + \Delta_{b(i), b(i)}^2].
\end{align*}
Recall that we wanted to show that this quantity is lower-bounded by $\frac{1}{d^2} \| \Delta \|_F^2$, minus some concentration terms. The first point to note is that each term has expectation
\begin{align*}
    \EE [\Delta_{a(i), b(i)}^2] &= \frac{1}{d(d-1)} \| P_\text{off-diag} (\Delta) \|_F^2,\\
    \EE [\Delta_{a(i), a(i)}^2] &= \frac{1}{d} \| P_\text{diag} (\Delta) \|_F^2,
\end{align*}
where $P_\text{diag}(\Delta)$ sets the off-diagonal terms of $\Delta$ to zero and $P_\text{off-diag}(\Delta) = \Delta - P_\text{diag}(\Delta)$ sets the diagonal terms to zero. Then, analyzing the off-diagonal terms and diagonal terms separately, we can first show concentration of the given sums around their expectations. The lemma then follows from proper treatment and recombination of the diagonal and off-diagonal terms.

\section{Experiments}\label{sec:experiments}

\subsection{Setup}

Finally, we evaluate our method on synthetic data and the 1000genomes dataset~\citep{fairly2019international}. The 1000genomes dataset contains fully sequenced chromosomes of 2354 subjects. Due to computational limitations, we first subsample to $m = 1\,500\,000$ rows (chosen from the first chromosome in order of mutation frequency) and the subjects to $d = 250$ columns (chosen randomly). We also reduce the number of rows further in some experiments to evaluate how error scales as a function of $m$. We then randomly sample between $k=2$ and $40$ observations per row and compare the estimated right-side singular vectors to the ground truth, produced by factoring the original fully sampled matrix. Specifically, let the ground truth SVD be given by $X = P\Sigma Q^T$. Then, we compute the error as\footnote{This minimization is known as the Procrustes problem~\citep{schonemann_procrustes_1996} and can be solved in closed form.}
\begin{align}\label{eqn:eval}
    \min_{R \in \RR^{r \times r}: R^TR = I_r} \| \hat Q R - Q \|_F^2.
\end{align}
The estimate $\hat Q$ is produced by the following algorithms:
\begin{enumerate}[(a)]
    \item Full matrix completion: perform rank $r$ matrix completion on $P_E(X)$ and compute the SVD of the result.
    \item Direct factorization: compute the rank $r$ SVD of the matrix $P_E(X)^TP_E(X)$.
    \item Factorization without diagonal~\citep{cai2021subspace}: compute the rank $r$ SVD of $P_\text{off-diag}(P_E(X)^TP_E(X))$.
    \item Our algorithm: perform rank $r$ matrix completion with respect to $\cL$ (Equation~\ref{eqn:loss-main}) and compute the SVD of the result. The loss function naturally generalizes to $k > 2$ via the formulation in Equation~\ref{eqn:loss-reinterpretation}.
\end{enumerate}
We implement vanilla matrix completion via the non-convex optimization\footnote{Note that the L2 regularizer in this objective corresponds to the likelihood under the Gaussian prior, which is the factor distribution that we use in our synthetic experiments. We choose $\lambda = 0.1$ using grid search over the set $(0, 0.001, 0.01, 0.1, 0.5, 1, 10)$.}
\begin{align*}
    \min_{\substack{U \in \RR^{m \times r}\\V \in \RR^{d \times r}}}\ &\frac{1}{|E|} \| P_E(UV^T) - P_E(X) \|_F^2 \\
    &+ \lambda \left( \frac{1}{m} \sum_{i=1}^m \| u_i \|_2^2 + \frac{1}{d} \sum_{i=1}^d \| v_i \|_2^2 \right).
\end{align*}
We also implement our method via the non-convex optimization $\min_{V \in \RR^{d \times r}} \cL(VV^T)$, where we fit a symmetric matrix and omit the L2 regularizer because the diagonal terms control the factor norms.

For the synthetic experiments, we sample $X = UV^T$ with $U \in \RR^{m \times r}$ and $V \in \RR^{d \times r}$ i.i.d.\ Gaussian $\mathcal{N}(0, r^{-1/2})$, rescaled to ensure that the expected norm of each entry is 1. We set the rank to $r = 25$ with $d = 100$ columns, while varying the number of rows $m$ and the observations per row $k$ depending on the experiment. 

We produce each of the plots by resampling the mask ten times for each parameter setting and plotting the mean plus or minus two standard deviations; see Section~\ref{subsec:hyperparams} for other implementation details and hyperparameters.

\subsection{Synthetic experiments}

\begin{figure}[t]
\begin{center}
    \includegraphics[width=0.7\linewidth]{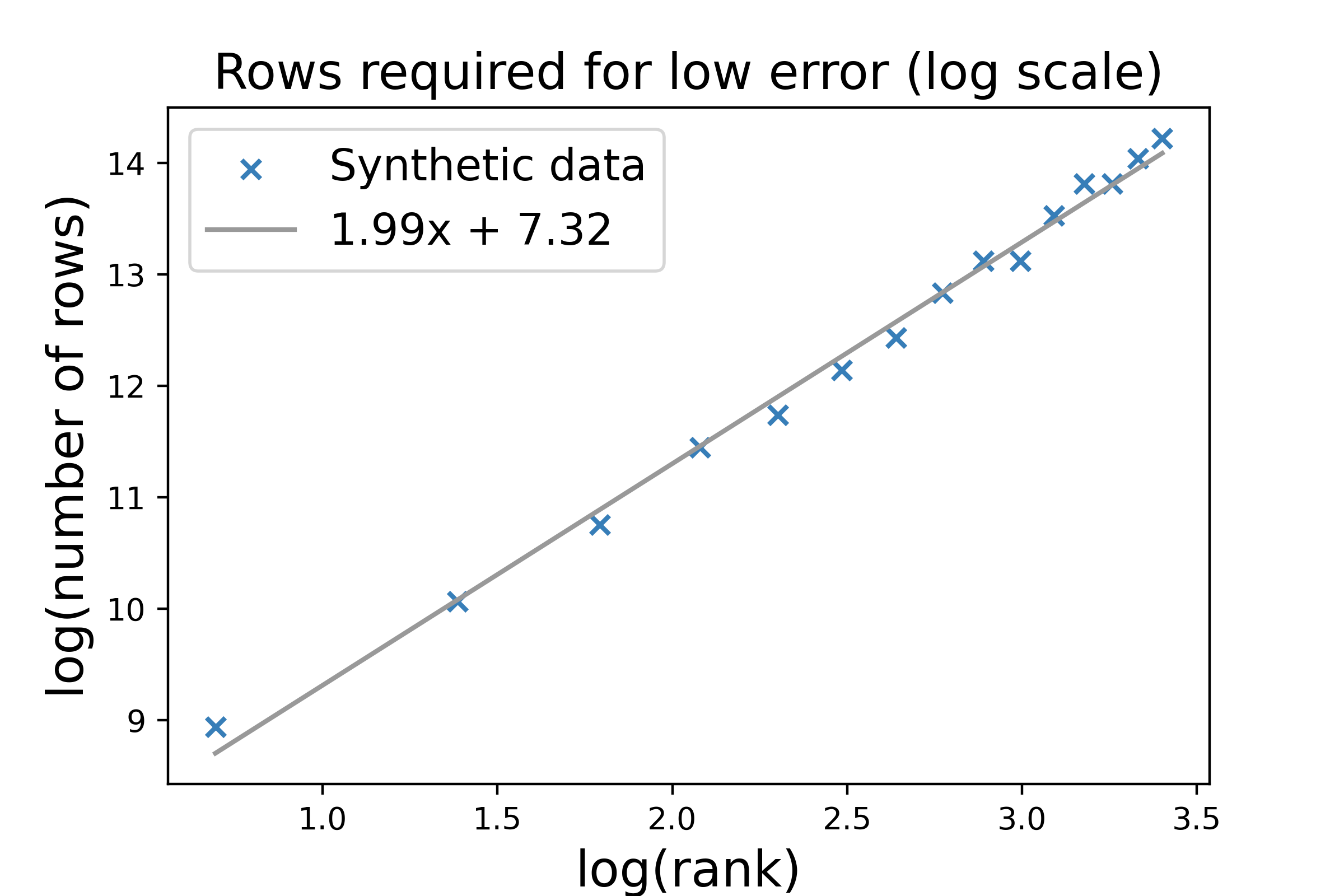}
\end{center}
\caption{\label{figure:rdep} A log plot of the row-rank dependence for our algorithm, where we plot the number of rows $m$ required to achieve low rowspace recovery error for each rank $r$ for i.i.d.\ Gaussian row and column factors, $k=2$ observations per row, and $d=200$ columns. The linear fit has slope almost exactly 2, suggesting a dependence of $m \propto r^2$ and confirming our derived bounds.}
\end{figure}

We first discuss the synthetic experiments, shown in Figure~\ref{figure:synthetic-plot}. From left to right, the experiments are as follows: (1) we sample $k=2$ observations per row and vary the number of rows $m$ from $10{,}000$ to $1{,}000{,}000$, (2) $k=30$ with $m$ between $100$ and $10{,}000$ (where fewer rows are necessary because $k$ is large), (3) $m = 10{,}000$ with $k$ between $2$ and $40$, and (4) $m = 500{,}000$ with $k$ between $2$ and $40$. 

In these experiments, our algorithm produces the most accurate rowspace estimates for all parameter ranges evaluated on. The algorithm is especially strong compared to other methods when $m$ is large or $k$ is small, which are the settings that motivate our study. Full matrix completion performs well when $m$ is small and $k$ is large, i.e.\ the setting with $X$ closer to square and with more observations, which is when we expect full completion to be feasible. However, even in this setting, our algorithm performs competitively with or outperforms full matrix completion.

\subsection{1000genomes experiments}
Next, we display similar plots for the 1000genomes dataset experiments, shown in Figure~\ref{figure:1000genomes-plot}, where from left to right the experiments are as follows: (1) we observe $k=2$ entries per row with the number of rows $m$ between $100{,}000$ and $1{,}500{,}000$, (2) $k=10$ with $m$ between $100{,}000$ and $1{,}500{,}000$, (3) $m = 100{,}000$ with $k$ between $2$ and $40$, and (4) $m = 1{,}500{,}000$ with $k$ between $2$ and $40$.\footnote{We set $r=10$ for each algorithm and compute the evaluation metric (Equation~\ref{eqn:eval}) with respect to the rank $10$ SVD of the original matrix. On the other hand, the ground truth TSNE plot is produced with respect to the full SVD of the original matrix.}

In these experiments, our algorithm is again most accurate for most parameter settings and is strongest relative to other methods when $m$ is large or $k$ is small. However, it is outperformed by full matrix completion when $m$ is small and $k$ is large, which is when we expect full completion to be possible. For the $m = 1{,}500{,}000$ setting, which is most representative of the ratio $m/d$ we might see in practice for this setting, our algorithm is most accurate for all values of $k$ evaluated on and is able to recover the rowspace reliably with as few as 5 observations per row.

Finally, we visualize the recovered column factors using TSNE~\citep{vandermaaten2008visualizing}, which projects the factors into two dimensions while attempting to preserve similarity structure (Figure~\ref{figure:tsne-1}). Visually, our algorithm recovers the most accurate factors by far and is almost identical to the ground truth. Factorization without the diagonal produces the next best estimates but is unable to separate the EUR, AMR, and SAS clusters. Meanwhile, full matrix completion is unable to recover coherent clusters because there are too few observations per row.

\subsection{Dependence on rank}
Finally, we perform synthetic experiments to verify the $r^2$ dependence in our derived sample complexity of $m = \Omega(r^2 d \log d)$. Specifically, for each rank $r$, we sample i.i.d.\ Gaussian factors with $d=200$ columns and $k=2$ observations per row, and we perform binary search over $m$ to achieve some target error.\footnote{For each candidate $m$ in the binary search, we average the loss over $20$ runs and accept if $\frac{1}{r} \| \hat Q R - Q \|_F^2 \in 0.1 \pm 0.02$. The search starts with a range of $m \in (0, 4e6)$.} The result is shown in Figure~\ref{figure:rdep}, where we plot $\log m$ versus $\log r$. The points lie almost exactly on a line with slope 2, suggesting a dependence of $m \propto r^2$ and confirming our derived bounds.

\section{Related Work}\label{sec:related}

While low-rank matrix completion and factorization have a long history of research, here we touch on just a few threads of work most directly related to our paper. 

\textbf{Matrix completion.} Matrix completion is the problem of estimating a low-rank matrix after observing a subset of its entries, and a variety of methods have been proposed and analyzed for this problem, including nuclear norm minimization~\citep{candes2009exact,candes2010matrix}, SVD with trimming~\citep{keshavan2009matrix}, alternating minimization~\citep{hardt2014understanding}, and non-convex gradient descent~\citep{ge2016matrix,jin2016provable}. To rule out certain matrices whose recovery is impossible, these papers propose various kinds of incoherence assumptions: for example, \citet{candes2009exact} make assumptions on the leverage scores of $X$, while \citet{negahban2012restricted} make assumptions about the ``spikiness'' of $X$. We adopt the assumptions and analysis framework of \citet{negahban2012restricted}, who prove additive Frobenius error bounds under the assumption that the maximum entry of the underlying matrix is bounded.

\textbf{Unbalanced noisy matrix factorization.} A recent series of works explores the problem of noisy matrix factorization for matrices with high aspect ratio (i.e.\ many more rows than columns). In this problem setting, the matrix is fully observed but with entry-wise additive noise. \citet{feldman2021spiked}, \citet{donoho2022optimal}, and \citet{montanari2022fundamental} study the asymptotics of this problem, characterizing the Bayes optimal error of recovering the singular vectors as the number of rows and columns $m, d \to \infty$. Broadly speaking, in contrast with past works which took $m, d \to \infty$ with the ratios $d/m$ and $m/d$ remaining bounded, these papers consider cases where $d/m \to \infty$ and $d/m \to 0$, finding regimes where recovery of the left singular values is possible but not the right singular vectors (and vice versa). While their setting and results are very different from ours, they share the commonality of studying cases where only ``one-sided'' recovery is possible.

\textbf{Subspace and covariance estimation from partial observations.} The papers most directly related to our setting are \citet{lounici2014high}, \citet{gonen2016subspace}, \citet{montanari-sun-2018}, and \citet{cai2021subspace}. \citet{gonen2016subspace} consider the problem of subspace estimation from partial observations: given $m$ partially observed vectors of dimension $d$, sampled i.i.d.\ from a bounded distribution, their goal is to recover the rank-$r$ subspace that those vectors lie in. As their algorithm, they perform factorization on $P_E(X)^TP_E(X)$ with the diagonal rescaled. However, they make no incoherence assumptions, making matrix completion inapplicable. Therefore, they find that a sample size of $m = \Omega((d/k)^2 r)$ is both necessary and sufficient to recover the subspace (where $k \geq 2$ is the average number of observations per vector). \citet{lounici2014high} study a similar setting, but with the weaker assumption that the vectors are sampled from a sub-Gaussian distribution, and they prove a similar sample complexity of $m = \Omega((d/k)^2 r \log d)$.

\citet{montanari-sun-2018} and \citet{cai2021subspace} also study subspace estimation from partial observations. Similar to the above papers, both algorithms involve factoring $P_E(X)^TP_E(X)$ with the diagonal rescaled, but they adopt incoherence assumptions to establish sample complexity bounds no longer quadratic in $d$. As their sampling distribution, they assume $n$ observations uniformly chosen from the $md$ entries, and they both prove similar sample complexities of $n = \Omega(r \sqrt{dm}\ {\rm polylog}(dm))$ (please see the original papers for the full results). For $k$ observations per row on average, the number of rows required then becomes $m = \Omega(r^2 (d/k)\ {\rm polylog}(d))$.

One difference in setting between our paper and the aforementioned papers is that they focus on subspace estimation, so it suffices to show that $P_E(X)^TP_E(X)$ (after rescaling) is close to $X^TX$ in operator norm. In contrast, because we are also interested in completing $X^TX$ and recovering the column factors, we show error bounds in Frobenius norm, which requires more accurate estimation of $X^TX$.

The other main difference between our paper and the aforementioned papers is that they consider the setting where each entry of $X$ is observed independently with probability $p$ (or equivalently, that the $n$ observations are uniform over the matrix). Under this model, even if $p$ is small enough such that there are two observations per row on average, some rows might still have larger numbers of observations. In contrast, we show that the column factors can be recovered even if \textit{all} of the rows have only two observations. From a theoretical standpoint, this setting is strictly harder than the Bernoulli observation setting because given the latter, we can keep the rows with at least two observations (which is satisfied by roughly $3/4$ of the rows for $d \gg 2$), subsample to two per row, and use our analysis to produce the same sample complexity (up to constant factors).

\section{Conclusion and Future Directions}\label{sec:conclusion}

One limitation of our result is that it only applies to two observations per row; therefore, a fruitful direction could involve extending it to more general cases, like $k$ observations per row or other sampling patterns. Empirically, we hope that our paper can inspire work on datasets that were previously too sparsely annotated for full matrix completion, but might be amenable to our algorithm.

\section*{Acknowledgements}
We thank John Thickstun, Tae Kyu Kim, and anonymous reviewers for their helpful comments and discussion. Gregory Valiant is supported by a Simons Investigator Award, and NSF awards CCF-1704417 and CCF-1813049. Steven Cao is supported by the NSF Graduate Research Fellowship. Experiments were run on the Stanford NLP cluster.

\bibliography{main}
\bibliographystyle{icml2023}

\newpage
\onecolumn
\appendix

\section{Proofs}

\subsection{Notation}
Letting $A$ and $B$ represent an arbitrary matrices, we'll use the following notation:
\begin{itemize}
    \item $\|A\|_\text{max}$ -- maximum of the absolute values of the entries of $A$
    \item $\|A\|_\text{op}$ -- operator norm
    \item $\|A\|_\text{nuc}$ -- nuclear norm
    \item $\|A\|_F$ -- Frobenius norm
    \item $\langle A, B \rangle = \text{tr}(A^TB)$ -- matrix inner product
    \item For a square matrix $C \in \RR^{d \times d}$, we'll use $P_\text{diag}(C)$ to represent the matrix $C$ with the off-diagonal terms set to zero, and $P_\text{off-diag}(C) = C - P_\text{diag}(C)$ to represent the matrix $C$ with the diagonal terms set to zero.
    \item We'll use $E_{i,j} \in \RR^{d \times d}$ to represent the matrix with 1 in entry $(i, j)$ and 0 elsewhere, and $\tilde E_{i, j}$ to refer to the symmetric mask $\frac{1}{2}(E_{i,j} + E_{j, i})$. 
    \item $f \lesssim g$ will denote that there exists a universal constant $c$ such that $f \leq cg$. 
    \item $[d]$ will denote the set $\{1, 2, ..., d \}$.
\end{itemize}
Throughout the proofs, when there are long chains of inequalities, we will sometimes box the terms that change from line to line for ease of reading.

\subsection{Setup and main result}\label{subsec:setup}

The problem setup is as follows: from a rank-$r$ matrix $X = UV^T \in \RR^{m \times d}$ with $U \in \RR^{m \times r}$ and $V \in \RR^{d \times r}$, we randomly observe two entries per row, which we can represent as indices $(a(1), b(1)), ..., (a(m), b(m))$ drawn i.i.d.\ uniformly from the set $\{(i,j) : i, j \in [d],\ i \neq j\}$. We wish to estimate the matrix
\begin{align*}
    \Theta^* &= \frac{1}{m}X^TX = V \bar S V^T \\
    \bar S &= \frac{1}{m} U^TU = \frac{1}{m} \sum_{i=1}^m u_i u_i^T \in \RR^{r \times r},
\end{align*}
where $u_i \in \RR^{r}$ is the $i$th row of $U$. Our estimator for $\Theta^*$ minimizes a squared loss with a nuclear norm regularizer, along with constraints on the maximum off-diagonal and diagonal entries, and is given as follows:
\begin{align}\label{eqn:program}
    \hat \Theta \in \argmin_{\| \Theta \|_\text{max} \leq \alpha} \cL(\Theta) + \lambda \| \Theta \|_\text{nuc},
\end{align}
where the squared loss $\cL(\Theta)$ is given by
\begin{align}\label{eqn:loss}
    \cL(\Theta) &=  \frac{1}{4m} \sum_{i=1}^m (\Theta_{a(i), b(i)} - X_{i, a(i)} X_{i, b(i)})^2 + (\Theta_{b(i), a(i)} - X_{i, b(i)} X_{i, a(i)})^2\nonumber \\
    &+ \frac{1}{4m} \sum_{i=1}^m (\Theta_{a(i), a(i)} - X_{i, a(i)}^2)^2 + (\Theta_{b(i), b(i)} - X_{i, b(i)}^2)^2.
\end{align}
Given the assumption that $\| X \|_\text{max}^2 \leq \alpha$, we will prove the following error bound:
\begin{thm}\label{thm:main}
Let $\hat \Theta$ be the solution of the optimization problem defined in Equation~\ref{eqn:program}, where $\lambda$ is set to $16 \alpha \sqrt{\frac{ \log d + \delta}{dm}}$. Also, suppose that $X$ is rank $r$ with $\| X \|_\text{max}^2 \leq \alpha$, and $m \geq d(\log d + \delta)$. Then, with probability $\geq 1 - 3e^{-\delta}$, we have that
\begin{align*}
    \frac{1}{d^2} \| \hat \Theta - \Theta^* \|_F^2 \lesssim \alpha^2 \frac{rd(\log d + \delta)}{m}.
\end{align*}
\end{thm}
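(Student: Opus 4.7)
The plan is to follow the restricted strong convexity (RSC) framework of Negahban--Wainwright. The starting point is the optimality of $\hat \Theta$ together with the feasibility of $\Theta^*$ (which holds because $\|\Theta^*\|_\text{max} \leq \alpha$ follows from $\|X\|_\text{max}^2 \leq \alpha$): setting $\Delta = \hat \Theta - \Theta^*$ and applying the reverse triangle inequality to the nuclear-norm term reduces the task to controlling three ingredients, namely an operator-norm bound on $\nabla \cL(\Theta^*)$, a quadratic lower bound on $\cL(\Theta^* + \Delta) - \cL(\Theta^*) - \langle \nabla \cL(\Theta^*), \Delta \rangle$, and a decomposability bound of the form $\|\Delta\|_\text{nuc} \lesssim \sqrt{r}\|\Delta\|_F$ (which follows by the standard cone argument of Negahban--Wainwright once $\lambda \geq 2\|\nabla \cL(\Theta^*)\|_\text{op}$).

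I expect the main obstacle to be the operator-norm bound on $\nabla \cL(\Theta^*)$. Expanding the gradient gives a sum over $i$ of rank-at-most-two mask matrices indexed by $(a(i), b(i))$, scaled by $X_{i,a(i)} X_{i,b(i)} - \Theta^*_{a(i), b(i)}$ (plus analogous diagonal contributions). Because we do not assume distributional independence of the row factors $u_i$, the scalar weights are not mean-zero in the usual sense. The key trick is that the law of $\nabla \cL(\Theta^*)$ is invariant under permutations of the rows of $X$, since the sampling indices $(a(i), b(i))$ are independent of $X$. This invariance allows a Rademacher symmetrization step applied to the permuted rows; combined with a matrix Bernstein inequality (conditioning on the sampling indices, with variance proxies bounded via $\|X\|_\text{max}^2 \leq \alpha$ and $m \geq d(\log d + \delta)$), this should yield $\|\nabla \cL(\Theta^*)\|_\text{op} \lesssim \alpha \sqrt{(\log d + \delta)/(dm)}$ with probability $\geq 1 - e^{-\delta}$.

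For the RSC lemma, direct expansion of the quadratic loss shows that $\cL(\Theta^* + \Delta) - \cL(\Theta^*) - \langle \nabla \cL(\Theta^*), \Delta \rangle$ is an empirical average of squared entries of $\Delta$ at the sampled positions, with expectation of order $\|P_\text{off-diag}(\Delta)\|_F^2 / d^2 + \|P_\text{diag}(\Delta)\|_F^2 / d$. I would then prove uniform concentration of this empirical average to its expectation over matrices with bounded $\|\cdot\|_\text{max}$ (which is automatic from the constraint $\|\hat \Theta\|_\text{max} \leq \alpha$) via a standard peeling argument parameterized by $\|\Delta\|_\text{nuc}$, producing a deviation term of order $\alpha \|\Delta\|_\text{nuc}\sqrt{(\log d + \delta)/(dm)}$, and then recombining the diagonal and off-diagonal pieces. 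To close the proof, substitute the RSC inequality into the optimality bound, replace $\|\nabla \cL(\Theta^*)\|_\text{op}$ and $\lambda$ by their common scale $\alpha \sqrt{(\log d + \delta)/(dm)}$, apply the cone bound $\|\Delta\|_\text{nuc} \lesssim \sqrt{r}\|\Delta\|_F$, and rearrange the resulting quadratic inequality in $\|\Delta\|_F$ to obtain the stated bound. A union bound over the three underlying high-probability events accounts for the $3 e^{-\delta}$ failure probability.
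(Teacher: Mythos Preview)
Your proposal is correct and follows essentially the same approach as the paper: the RSC framework of Negahban--Wainwright, the permutation-invariance trick to enable Rademacher symmetrization for the operator-norm bound on $\nabla\cL(\Theta^*)$, a functional-Bernstein plus peeling argument for RSC (the paper peels over both $\|\Delta\|_F$ and $\|\Delta\|_\text{nuc}$, not just the latter, but this is a minor refinement), the standard decomposability cone bound, and the union bound over the operator-norm event and the two RSC events (diagonal and off-diagonal) to get the $3e^{-\delta}$ failure probability.
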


\subsection{Rowspace recovery}

From Theorem~\ref{thm:main}, we can first derive a rowspace recovery result, where our goal is to estimate the subspace spanned by the rows of $X$. In particular, the rowspace of $X$ is equal to the rowspace of $\Theta^* = \frac{1}{m} X^TX$, and we have an error bound for our estimator $\hat \Theta$. Therefore, we can use the rowspace of $\hat \Theta$ as our estimator and use standard perturbation theory to bound the rowspace estimation error. Specifically, letting the rank SVD of $\Theta^*$ be given by $Q \Lambda Q^T$ for $Q \in \RR^{d \times r}$, we can think of rowspace recovery as estimating the right-side singular vectors $Q$ up to rotation. Then, we can define rowspace recovery error as $\min_{R \in \RR^{r \times r}: R^TR = I_r} \| \hat Q R - Q \|_F^2$ and produce the following error bound:
\begin{cor}[Right-side singular vector recovery]\label{cor:rowspace}
    Let $\hat \Theta$ be the solution of the optimization problem defined in Equation~\ref{eqn:program}, where $\lambda$ is set to $16 \alpha \sqrt{\frac{ \log d + \delta}{dm}}$, and let $\hat Q \in \RR^{d \times r}$ be the top $r$ singular vectors of $\hat \Theta$. Also, suppose that $X$ is rank $r$ with $\| X \|_\text{max}^2 \leq \alpha$ and $m \geq d(\log d + \delta)$, and let $Q \in \RR^{d \times r}$ denote the right-side singular vectors of $X$. Then, letting $\sigma_r$ be the $r$th singular value of $\Theta^* = \frac{1}{m}X^TX$, we have that with probability $\geq 1 - 3e^{-\delta}$,
    \begin{align*}
        \min_{R \in \RR^{r \times r}: R^TR = I_r} \| \hat Q R - Q \|_F^2 \lesssim \left(\frac{d \alpha}{\sigma_r}\right)^2 \frac{rd(\log d + \delta)}{m}.
    \end{align*}
\end{cor}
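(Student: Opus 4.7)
The plan is to derive Corollary~\ref{cor:rowspace} by coupling the Frobenius-error bound of Theorem~\ref{thm:main} with a standard singular-subspace perturbation inequality of Wedin / Davis--Kahan type. The starting observation is that $\Theta^* = \frac{1}{m} X^T X$ is symmetric positive semidefinite with rank equal to $r$, so its top-$r$ right singular vectors span the rowspace of $X$ (yielding $Q$ up to rotation) and $\sigma_{r+1}(\Theta^*) = 0$. Thus the $r$-th singular gap of $\Theta^*$ is exactly $\sigma_r$, and one is in position to apply a perturbation bound directly.

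In the regime $\|\hat \Theta - \Theta^*\|_\text{op} \leq \sigma_r/2$, Weyl's inequality forces $\sigma_{r+1}(\hat \Theta) \leq \sigma_r/2$, so the effective gap at index $r$ is at least $\sigma_r/2$. Wedin's $\sin\Theta$ theorem then yields $\|\sin\Theta(Q, \hat Q)\|_F \lesssim \|\hat \Theta - \Theta^*\|_F / \sigma_r$, and combining with the standard conversion $\min_{R: R^T R = I_r} \|\hat Q R - Q\|_F^2 \leq 2 \|\sin\Theta(Q, \hat Q)\|_F^2$ together with the bound $\|\hat \Theta - \Theta^*\|_F^2 \lesssim d^2 \alpha^2 \cdot rd(\log d + \delta)/m$ supplied by Theorem~\ref{thm:main} gives
\begin{align*}
    \min_{R : R^T R = I_r} \|\hat Q R - Q\|_F^2 \;\lesssim\; \frac{d^2 \alpha^2}{\sigma_r^2} \cdot \frac{rd(\log d+\delta)}{m} \;=\; \Big(\frac{d \alpha}{\sigma_r}\Big)^2 \frac{rd(\log d+\delta)}{m},
\end{align*}
which is exactly the claimed bound. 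In the complementary regime $\|\hat \Theta - \Theta^*\|_\text{op} > \sigma_r/2$, Theorem~\ref{thm:main} forces $\sigma_r^2 \lesssim d^2 \alpha^2 \cdot rd(\log d+\delta)/m$, so the stated right-hand side is already of constant order, and one can close the argument by pairing this with the trivial a priori bound $\min_R \|\hat Q R - Q\|_F^2 \leq 2r$ that holds because both $\hat Q R$ and $Q$ have orthonormal columns.

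The main obstacle here is essentially bookkeeping rather than a genuinely new estimate: one must pick the Frobenius (rather than operator-norm) form of Wedin's theorem to avoid picking up an extraneous $\sqrt{r}$, convert $\sin\Theta$-distance into Procrustes distance, and dispose of the vacuous regime carefully. All of the probabilistic and random-matrix content has already been absorbed into Theorem~\ref{thm:main}, so this corollary is a short perturbation-theory postprocessing step; there is no new concentration argument needed.
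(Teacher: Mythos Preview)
Your overall approach matches the paper's: reduce to Theorem~\ref{thm:main} and then apply a Davis--Kahan/Wedin-type perturbation bound, using that $\Theta^*$ is symmetric of rank exactly $r$ so its $r$-th spectral gap is $\sigma_r$. The paper, however, avoids your case split entirely by invoking the variant of Davis--Kahan due to Yu, Wang, and Samworth (2015), which bounds the Procrustes distance directly by $2^{3/2}\|\hat\Theta - \Theta^*\|_F / (\lambda_r(\Theta^*) - \lambda_{r+1}(\Theta^*))$ using only the eigengap of $\Theta^*$, with no condition on $\hat\Theta$. Squaring and plugging in Theorem~\ref{thm:main} then gives the corollary in one line.

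Your case split does not close as written. In the regime $\|\hat\Theta - \Theta^*\|_\text{op} > \sigma_r/2$ you correctly deduce that the right-hand side of the corollary is $\gtrsim 1$, but the trivial bound you pair it with is $\min_R \|\hat Q R - Q\|_F^2 \leq 2r$, not $O(1)$. Since $r$ is not assumed bounded, ``right-hand side of constant order'' does not dominate $2r$, and the inequality fails in this regime. The clean fix is exactly what the paper does: cite the Yu--Wang--Samworth version, which holds unconditionally and already delivers the Procrustes bound in Frobenius norm with the gap of $\Theta^*$ alone, making the vacuous-regime argument unnecessary.
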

Note that while the additive error in Theorem~\ref{thm:main} scales with the magnitude of $X$, the rowspace recovery error is scale-invariant. Therefore, while the bound in Theorem~\ref{thm:main} scales with $\alpha$, the bound in Corollary~\ref{cor:rowspace} is in terms of the quantity $\frac{d\alpha}{\sigma_r}$. This quantity can be thought of as capturing both the incoherence and condition number of $X$: for example, it is large for ``spiky'' matrices, where rowspace recovery is impossible (e.g. $dm$ for the matrix with 1 in a single entry and 0 otherwise), and it is small for ``incoherent'' matrices (e.g.\ $1$ for the all-ones matrix). The corollary directly follows from Theorem 2 of \citet{yu2015useful}, a variant of the Davis-Kahan theorem~\citep{davis-kahan-70}:
\begin{lem}[Theorem 2 of \citet{yu2015useful}]
    Let $A, \hat A \in \RR^{d \times d}$ be symmetric matrices with eigenvalues $\lambda_1 \geq ... \geq \lambda_d$ and $\hat \lambda_1 \geq ... \geq \hat \lambda_d$. Fixing some $1 \leq r \leq d$, suppose that $\lambda_r - \lambda_{r+1} > 0$, where $\lambda_{d+1} := -\infty$. Let $V = (v_1, ..., v_r) \in \RR^{d \times r}$ and $\hat V = (\hat v_1, ..., \hat v_r) \in \RR^{d \times r}$ have orthonormal columns satisfying $A v_i = \lambda_i v_i$ and $\hat A \hat v_i = \hat \lambda_i \hat v_i$. Then,
    \begin{align*}
        \min_{R \in \RR^{r \times r}: R^T R = I_r} \| \hat V R - V \|_F \leq \frac{2^{3/2} \| \hat A - A \|_F}{\lambda_r - \lambda_{r+1}}.
    \end{align*}
\end{lem}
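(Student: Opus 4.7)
The plan is to prove this Davis--Kahan variant in two stages. First, reduce the orthogonal Procrustes distance $\min_R \|\hat V R - V\|_F$ to the Frobenius sine-$\Theta$ distance between the subspaces $\mathrm{range}(V)$ and $\mathrm{range}(\hat V)$ (losing a factor of $\sqrt 2$); second, bound the sine-$\Theta$ distance by the classical Davis--Kahan theorem (losing another factor of $2$ because the gap assumption is only on $A$, not on $\hat A$). The product of constants $\sqrt 2 \cdot 2 = 2^{3/2}$ matches the stated bound.

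For the Procrustes reduction, take the SVD $\hat V^T V = W_1 \Sigma W_2^T$, where $\Sigma = \mathrm{diag}(\cos\theta_1,\dots,\cos\theta_r)$ encodes the cosines of the principal angles $\theta_i \in [0,\pi/2]$ between the two subspaces, so in particular $\cos\theta_i \ge 0$. For any orthogonal $R$,
\begin{align*}
    \|\hat V R - V\|_F^2 = 2r - 2\,\mathrm{tr}(R^T \hat V^T V),
\end{align*}
and the minimum over orthogonal $R$ is attained at $R^\star = W_1 W_2^T$ with value $2\sum_i (1 - \cos\theta_i)$. Using the elementary identity $\sin^2\theta = (1-\cos\theta)(1+\cos\theta)$ and the fact that $\cos\theta_i \ge 0$, we get $1-\cos\theta_i \le \sin^2\theta_i$, and hence $\min_{R^T R = I_r} \|\hat V R - V\|_F \le \sqrt 2 \, \|\sin\Theta\|_F$.

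For the Davis--Kahan step, projecting the identity $\hat A \hat V = \hat V \hat \Lambda$ onto $V_\perp$ and using $V_\perp^T A = \Lambda_\perp V_\perp^T$, where $\Lambda_\perp = \mathrm{diag}(\lambda_{r+1},\dots,\lambda_d)$, gives the Sylvester-type equation
\begin{align*}
    V_\perp^T \hat V \hat \Lambda - \Lambda_\perp V_\perp^T \hat V = V_\perp^T (\hat A - A) \hat V.
\end{align*}
When $\hat\lambda_r > \lambda_{r+1}$, solving entrywise in an appropriate basis yields the classical bound
\begin{align*}
    \|\sin\Theta\|_F = \|V_\perp^T \hat V\|_F \le \frac{\|V_\perp^T (\hat A - A) \hat V\|_F}{\hat\lambda_r - \lambda_{r+1}} \le \frac{\|\hat A - A\|_F}{\hat\lambda_r - \lambda_{r+1}}.
\end{align*}
To replace the $\hat A$-dependent denominator by the $A$-only gap $\lambda_r - \lambda_{r+1}$, invoke Weyl's inequality: $\hat\lambda_r \ge \lambda_r - \|\hat A - A\|_\mathrm{op} \ge \lambda_r - \|\hat A - A\|_F$. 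When $\|\hat A - A\|_F \le \frac{1}{2}(\lambda_r - \lambda_{r+1})$, this gives $\hat\lambda_r - \lambda_{r+1} \ge \frac{1}{2}(\lambda_r - \lambda_{r+1})$, producing $\|\sin\Theta\|_F \le 2\|\hat A - A\|_F/(\lambda_r - \lambda_{r+1})$. In the complementary regime, the target Procrustes bound already exceeds $\sqrt 2$, and one combines the trivial bound $\|\sin\Theta\|_F \le \sqrt r$ with the operator-norm form of Davis--Kahan (as in Yu, Wang, and Samworth) to close the argument.

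The main obstacle is the second stage. The classical sine-$\Theta$ theorem requires spectral separation between the top-$r$ eigenvalues of $\hat A$ and the bottom-$(d-r)$ eigenvalues of $A$, not just a gap within $A$; converting between the two via Weyl costs the factor of $2$ above, and handling the regime where Weyl is insufficient requires the operator-norm variant. By contrast, the first stage is a standard orthogonal-Procrustes calculation that depends only on the sign convention $\cos\theta_i \ge 0$ and the elementary trigonometric inequality $1 - \cos\theta \le \sin^2\theta$.
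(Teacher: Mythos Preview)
The paper does not prove this lemma: it is quoted verbatim as Theorem~2 of \citet{yu2015useful} and invoked as a black box in the proof of Corollary~\ref{cor:rowspace}. There is therefore no ``paper's own proof'' to compare against; your sketch is already more than the paper supplies.

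Your two-stage outline is the right shape and matches the architecture of the original Yu--Wang--Samworth argument. Stage~1 (the Procrustes-to-$\sin\Theta$ reduction via the SVD of $\hat V^T V$ and the inequality $1-\cos\theta \le \sin^2\theta$) is correct and gives exactly the factor $\sqrt{2}$. In Stage~2, the ``easy'' regime is also correct, though the natural split is on $\|E\|_{\mathrm{op}} \le \delta/2$ rather than $\|E\|_F \le \delta/2$ (Weyl only needs the operator norm, and splitting on $\|E\|_F$ throws away cases you could handle directly).

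The gap is in your ``complementary regime.'' Writing $\delta = \lambda_r - \lambda_{r+1}$ and $E = \hat A - A$, once $\|E\|_{\mathrm{op}} > \delta/2$ your proposed combination of the trivial bound $\|\sin\Theta\|_F \le \sqrt{r}$ with the operator-norm Davis--Kahan inequality yields $\|\sin\Theta\|_F \le 2\sqrt{r}\,\|E\|_{\mathrm{op}}/\delta$, which is the $\sqrt{r}\,\|E\|_{\mathrm{op}}$ half of the Yu--Wang--Samworth $\min$ bound, not the Frobenius half $2\|E\|_F/\delta$ that the stated lemma requires. Concretely, if $E$ is nearly rank-one with $\|E\|_F \approx \|E\|_{\mathrm{op}}$ just above $\delta/2$, the target right-hand side $2\|E\|_F/\delta$ is only slightly larger than $1$, while your argument controls $\|\sin\Theta\|_F$ only by $\sqrt{r}$; for $r \ge 2$ this does not close. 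Deferring to ``the operator-norm form of Davis--Kahan (as in Yu, Wang, and Samworth)'' for this case is effectively citing the very theorem you are proving. The complementary regime is precisely where the content of the Yu--Wang--Samworth paper lies, and it needs a genuine argument (or an explicit citation) rather than the sketch you give.
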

\begin{proof}[Proof of Corollary~\ref{cor:rowspace}]
    First, letting the SVD of $X$ be given by $P \Sigma Q^T$, we have that the SVD of $\Theta^*$ is given by $Q \Lambda Q^T$ (for $\Lambda = \frac{1}{m} \Sigma^T\Sigma \in \RR^{r \times r}$). Then, by Theorem~{2} of \citet{yu2015useful}, we have that
    \begin{align*}
        \min_{R \in \RR^{r \times r}: R^TR = I_r} \| \hat Q R - Q \|_F^2 \leq 8 \frac{\| \hat \Theta - \Theta^* \|_F^2}{\sigma_r(\Theta^*)^2}.
    \end{align*}
    The corollary then immediately follows from Theorem~\ref{thm:main}.
\end{proof}

\subsection{Column factor recovery}

In Corollary~\ref{cor:rowspace}, a $\sigma_r$ factor appears in the denominator  because the algorithm is tasked with recovering all $r$ singular directions, even if some singular directions have low weight (i.e.\ $\sigma_r$ is small). However, in many applications, we often care only about recovering the singular directions with high weight, as the low weight singular directions have little effect on the data. Therefore, from Theorem~\ref{thm:main} we can also derive a column factor recovery result, which can be thought of as a weighted version of the rowspace recovery result. In particular, our goal here is to recover $Q \Lambda^{1/2} \in \RR^{d \times r}$ up to rotation, which can be thought of as containing column factors, or $r$-dimensional vector representations for each column in $X$. Then, using the top $r$ singular values and vectors of $\hat \Theta$ for our estimate, we can produce the following error bound:
\begin{cor}[Column factor recovery]\label{cor:colfactor}
    Let $\hat \Theta$ be the solution of the optimization problem defined in Equation~\ref{eqn:program}, where $\lambda$ is set to $16 \alpha \sqrt{\frac{ \log d + \delta}{dm}}$, and let $\hat Q \in \RR^{d \times r}$ and $\hat \Lambda \in \RR^{r \times r}$ be the top $r$ singular vectors and singular values of $\hat \Theta$. Also, suppose that $X$ is rank $r$ with $\| X \|_\text{max}^2 \leq \alpha$ and $m \geq d(\log d + \delta)$, and let $\Theta^* = Q \Lambda Q^T$ be the SVD of $\Theta^* = \frac{1}{m}X^TX$, where $Q \in \RR^{d \times r}$ and $\Lambda \in \RR^{r \times r}$. Then, we have that with probability $\geq 1 - 3e^{-\delta}$,
    \begin{align*}
        \min_{R \in \RR^{r\times r}: R^TR = I_r} \frac{1}{d} \| \hat Q \hat \Lambda^{1/2} R - Q \Lambda^{1/2} \|_F^2 &\lesssim \alpha \sqrt{\frac{r^2 d(\log d + \delta)}{m}}.
    \end{align*}
\end{cor}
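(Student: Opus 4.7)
The plan is to combine Theorem~\ref{thm:main} with a Procrustes-to-square-root perturbation argument that converts a Frobenius bound between PSD matrices into a Procrustes bound between their factor matrices. First, I would note that $\hat\Theta$ is symmetric: the loss in \ref{eqn:loss} is invariant under $\Theta\leftrightarrow\Theta^T$, as is the nuclear norm, so by convexity $(\hat\Theta+\hat\Theta^T)/2$ also solves the program. Define $\hat A:=\hat Q\hat\Lambda\hat Q^T$, a rank-$r$ PSD matrix. When the Frobenius error from Theorem~\ref{thm:main} is smaller than $\sigma_r(\Theta^*)$, Weyl's inequality forces the top $r$ eigenvalues of $\hat\Theta$ to be positive and close to those of $\Theta^*$, so $\hat A$ coincides with the best rank-$r$ Frobenius approximation of $\hat\Theta$; since $\Theta^*$ is itself a rank-$r$ competitor, $\|\hat A-\hat\Theta\|_F\leq\|\Theta^*-\hat\Theta\|_F$ and hence $\|\hat A-\Theta^*\|_F\leq 2\|\hat\Theta-\Theta^*\|_F$ by triangle.

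The main analytic ingredient is that for any PSD rank-$r$ matrices $\hat A=\hat U\hat U^T$ and $\Theta^*=UU^T$ with $\hat U, U\in\RR^{d\times r}$,
\begin{align*}
    \min_{R:R^TR=I_r}\|\hat U R-U\|_F^2 \;\leq\; \|\sqrt{\hat A}-\sqrt{\Theta^*}\|_F^2 \;\leq\; \|\hat A-\Theta^*\|_\text{nuc},
\end{align*}
where $\sqrt{\cdot}$ denotes the PSD square root. The first inequality follows by expanding the Procrustes identity $\min_R\|\hat U R-U\|_F^2=\|\hat U\|_F^2+\|U\|_F^2-2\|\hat U^TU\|_\text{nuc}$, using $\|\hat U^TU\|_\text{nuc}=\|\sqrt{\hat A}\sqrt{\Theta^*}\|_\text{nuc}$ (singular values are invariant under multiplication by matrices with orthonormal columns), and $\text{tr}(\sqrt{\hat A}\sqrt{\Theta^*})\leq\|\sqrt{\hat A}\sqrt{\Theta^*}\|_\text{nuc}$ (the trace being nonnegative since $(\sqrt{\Theta^*})^{1/2}\sqrt{\hat A}(\sqrt{\Theta^*})^{1/2}\succeq 0$). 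The second is the classical Powers-Størmer inequality.

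Finally, because $\hat A-\Theta^*$ has rank at most $2r$, the nuclear-to-Frobenius bound gives $\|\hat A-\Theta^*\|_\text{nuc}\leq\sqrt{2r}\,\|\hat A-\Theta^*\|_F$. Setting $\hat U=\hat Q\hat\Lambda^{1/2}$ and $U=Q\Lambda^{1/2}$, chaining everything, and invoking Theorem~\ref{thm:main},
\begin{align*}
    \min_{R}\|\hat Q\hat\Lambda^{1/2}R-Q\Lambda^{1/2}\|_F^2 \;\leq\; 2\sqrt{2r}\,\|\hat\Theta-\Theta^*\|_F \;\lesssim\; \alpha\,d\sqrt{\frac{r^2 d(\log d+\delta)}{m}},
\end{align*}
and dividing by $d$ gives the stated bound. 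The main obstacle is the Procrustes-to-square-root step; the extra $\sqrt{r}$ relative to Theorem~\ref{thm:main} arises precisely from the $\sqrt{2r}$ factor in the nuclear-to-Frobenius conversion for a rank-$2r$ difference. The corner case where some top-$r$ singular value of $\hat\Theta$ corresponds to a negative eigenvalue of $\hat\Theta$ can only occur when $\|\hat\Theta-\Theta^*\|_\text{op}\gtrsim\sigma_r(\Theta^*)$, a regime in which the claimed bound is already vacuous, so this edge case does not affect the stated rate.
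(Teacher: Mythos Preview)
Your proposal is correct and follows essentially the same route as the paper's proof: both combine the Procrustes-to-square-root inequality, Powers--St{\o}rmer, the $\sqrt{2r}$ nuclear-to-Frobenius conversion on a rank-$2r$ difference, Eckart--Young--Mirsky, and Theorem~\ref{thm:main}. You are in fact slightly more careful than the paper in arguing that $\hat\Theta$ may be taken symmetric and in flagging the edge case where a top singular value of $\hat\Theta$ corresponds to a negative eigenvalue, a point the paper's proof silently glosses over.
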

Note that unlike Corollary~\ref{cor:rowspace}, this bound does not depend on $\sigma_r$ because the singular vectors are weighted by their corresponding singular values. The proof uses Powers-St{\o}rmer~\citep{Powers1970} and proceeds as follows:
\begin{lem}[Powers-St{\o}rmer]
    For positive semidefinite matrices $A$ and $B$, we have that
    \begin{align*}
        \| A - B \|_F^2 \leq \| A^2 - B^2 \|_\text{nuc}.
    \end{align*}
\end{lem}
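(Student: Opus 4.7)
The plan is to combine the duality between nuclear and operator norms with the spectral decomposition of the Hermitian matrix $A-B$, exploiting the identity $A^2-B^2 = A(A-B) + (A-B)B$ to reduce everything to a scalar inequality on eigenvalues.

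First I would let $C = A - B$ and diagonalize it as $C = \sum_i \lambda_i w_i w_i^T$, where $\{w_i\}$ is an orthonormal basis of eigenvectors and $\lambda_i \in \RR$ (possibly of either sign). Then $|C| = \sum_i |\lambda_i| w_i w_i^T$, and $S := \sum_i \operatorname{sign}(\lambda_i) w_i w_i^T$ is a self-adjoint contraction (in fact $S^2$ is the projection onto the support of $C$) satisfying $\|S\|_{\text{op}} \leq 1$ and $SC = CS = |C|$. By nuclear/operator norm duality, $\|A^2 - B^2\|_{\text{nuc}} \geq \operatorname{tr}((A^2-B^2)S)$.

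Next I would expand $A^2 - B^2 = AC + CB$ and compute
\begin{align*}
\operatorname{tr}((A^2 - B^2) S) &= \operatorname{tr}(A C S) + \operatorname{tr}(C B S) \\
&= \operatorname{tr}(A |C|) + \operatorname{tr}(|C| B) \\
&= \operatorname{tr}((A+B)|C|),
\end{align*}
using $CS = SC = |C|$ and cyclicity of the trace. Substituting the spectral decomposition of $|C|$, this becomes $\sum_i |\lambda_i| \, w_i^T (A+B) w_i$.

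The key step, which I expect to be the only nontrivial point, is to show that $w_i^T(A+B)w_i \geq |\lambda_i|$ for every $i$. Since $A, B \succeq 0$, the quantities $a_i := w_i^T A w_i$ and $b_i := w_i^T B w_i$ are both non-negative, and $a_i - b_i = w_i^T C w_i = \lambda_i$. If $\lambda_i \geq 0$ then $a_i \geq \lambda_i$, so $a_i + b_i \geq \lambda_i = |\lambda_i|$; if $\lambda_i < 0$ then $b_i \geq -\lambda_i$, so $a_i + b_i \geq |\lambda_i|$. In either case $w_i^T(A+B)w_i \geq |\lambda_i|$.

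Putting the pieces together,
\begin{align*}
\|A^2 - B^2\|_{\text{nuc}} \;\geq\; \operatorname{tr}((A+B)|C|) \;=\; \sum_i |\lambda_i|\, w_i^T(A+B)w_i \;\geq\; \sum_i \lambda_i^2 \;=\; \|C\|_F^2 \;=\; \|A-B\|_F^2,
\end{align*}
which is the desired inequality. The only place PSD-ness of both $A$ and $B$ is used is in the eigenvalue-by-eigenvalue bound above; the rest is cyclicity of the trace and the commutation $SC = CS$ that follows from $S$ being a function of $C$.
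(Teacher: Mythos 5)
Your proof is correct. The paper itself does not prove this lemma at all --- it simply invokes the Powers--St{\o}rmer inequality with a citation to the original 1970 paper --- so there is no in-paper argument to compare against. What you have written is essentially the classical Powers--St{\o}rmer argument, specialized to finite dimensions: write $C = A-B$ with spectral decomposition $C = \sum_i \lambda_i w_i w_i^T$, test $A^2 - B^2$ against the sign matrix $S$ via the duality $\operatorname{tr}(MS) \leq \|M\|_\text{nuc}\|S\|_\text{op}$, use $A^2 - B^2 = AC + CB$ and $SC = CS = |C|$ to get $\operatorname{tr}((A+B)|C|)$, and then exploit positive semidefiniteness of $A$ and $B$ only through the scalar bound $w_i^T(A+B)w_i \geq |\lambda_i|$. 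Each of these steps checks out (the algebraic identity $A(A-B)+(A-B)B = A^2-B^2$, the cyclicity manipulations, and the case analysis on the sign of $\lambda_i$ are all valid), so your write-up is a complete, self-contained proof of the lemma that the paper leaves as an external reference.
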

\begin{proof}[Proof of Corollary~\ref{cor:colfactor}]
    First, by the Powers-St{\o}rmer inequality~\citep{Powers1970}, we have that
    \begin{align*}
        \| \hat Q \hat \Lambda^{1/2} \hat Q^T - Q \Lambda^{1/2} Q^T \|_F^2 \leq \| \hat \Theta_r - \Theta^* \|_\text{nuc},
    \end{align*}
    where $\hat \Theta_r = \hat Q \hat \Lambda \hat Q^T$ is the rank-$r$ truncated version of $\hat \Theta$. Next, we have that
    \begin{align*}
        \| \hat \Theta_r - \Theta^* \|_\text{nuc} &\overset{(i)}{\leq} \sqrt{2r} \| \hat \Theta_r - \Theta^* \|_F \\
        &\overset{(ii)}{\leq} \sqrt{2r} (\| \hat \Theta - \hat \Theta_r \|_F + \| \hat \Theta - \Theta^* \|_F) \\
        &\overset{(iii)}{\leq} 2 \sqrt{2r} \| \hat \Theta - \Theta^* \|_F \\
        &\overset{(iv)}{\lesssim} \alpha \sqrt{r} d \sqrt{\frac{r d (\log d + \delta)}{m}}.
    \end{align*}
    where (i) follows from $\hat \Theta_r - \Theta^*$ being at most rank $2r$, (ii) from triangle, (iii) from the Eckart-Young-Mirsky theorem~\citep{Eckart1936,mirsky1960}, which states that $\hat \Theta_r$ is the closest rank-$r$ matrix to $\hat \Theta$ in any unitarily invariant norm, and (iv) from applying Theorem~\ref{thm:main}. Then, it suffices to show that 
    \begin{align*}
        \min_{R \in \RR^{r\times r}: R^TR = I_r} \| \hat Q \hat \Lambda^{1/2} R - Q \Lambda^{1/2} \|_F^2 &\leq \| \hat Q \hat \Lambda^{1/2} \hat Q^T - Q \Lambda^{1/2} Q^T \|_F^2.
    \end{align*}
    To show this inequality, we can choose a particular rotation $R$ as follows:
    \begin{align*}
        \| \hat Q \hat \Lambda^{1/2} R - Q \Lambda^{1/2} \|_F^2 &= \text{tr}(\Lambda) + \text{tr}(\hat \Lambda) - 2 \text{tr}(R^T \hat \Lambda^{1/2} \hat Q^TQ \Lambda^{1/2}) \\
        &\overset{(i)}{=} \text{tr}(\Lambda) + \text{tr}(\hat \Lambda) - 2 \| \hat \Lambda^{1/2} \hat Q^TQ \Lambda^{1/2} \|_\text{nuc} \\
        &\overset{(ii)}{=} \text{tr}(\Lambda) + \text{tr}(\hat \Lambda) - 2 \| \hat Q \hat \Lambda^{1/2} \hat Q^TQ \Lambda^{1/2} Q \|_\text{nuc} \\
        &\overset{(iii)}{\leq} \text{tr}(\Lambda) + \text{tr}(\hat \Lambda) - 2 \text{tr}( \hat Q \hat \Lambda^{1/2} \hat Q^TQ \Lambda^{1/2} Q ) \\
        &= \| \hat Q \hat \Lambda^{1/2} \hat Q^T - Q \Lambda^{1/2} Q^T \|_F^2
    \end{align*}
    where (i) follows from choosing $R = AB^T$ for $A$ and $B$ given by the SVD $\hat \Lambda^{1/2} \hat Q^TQ \Lambda^{1/2} = A S B^T$, (ii) follows from the fact that $\| C \|_\text{nuc} = \| Q_1 C Q_2^T \|_\text{nuc}$ for any $Q_1, Q_2$ such that $Q_1^TQ_1 = I$ and $Q_2^TQ_2 = I$, and (iii) follows from the fact that $\text{tr}(C) \leq |\text{tr}(C)| \leq \| C \|_\text{nuc}$ for any matrix $C$, proving the desired result.
\end{proof}

\subsection{Proof outline}
In this section, we outline the proof of Theorem~\ref{thm:main}. Our analysis uses restricted strong convexity arguments, as described in~\citet{negahban2012restricted}, \citet{negahban2012unified}, and \citet{wainwright2019high}. For completeness, we reproduce parts of their analysis; such lemmas will also be marked with their source. The proof proceeds as follows: letting $\Delta$ denote the error $\hat \Theta - \Theta^*$, by the optimality of $\hat \Theta$ we can write
\begin{align*}
    0 &\geq \cL(\Theta^* + \Delta) - \cL(\Theta^*) + \lambda ( \| \Theta^* + \Delta \|_\text{nuc} - \| \Theta^* \|_\text{nuc} ) \\
    &\geq \cL(\Theta^* + \Delta) - \cL(\Theta^*) - \lambda \| \Delta \|_\text{nuc},
\end{align*}
where the second line follows from reverse triangle. If $\cL$ were strongly convex with parameter $\tau$, then we would have $0 \geq \cL(\Theta^* + \Delta) - \cL(\Theta^*) - \lambda \| \Delta \|_\text{nuc} \geq \langle \nabla \cL(\Theta^*), \Delta \rangle + \tau \| \Delta \|_F^2  - \lambda \| \Delta \|_\text{nuc}  \geq - \| \nabla \cL(\Theta^*) \|_\text{op} \| \Delta \|_\text{nuc} + \tau \| \Delta \|_F^2  - \lambda \| \Delta \|_\text{nuc} $, where the second inequality follows from strong convexity and the third from Holder's inequality. Then, bounding $\| \nabla \cL(\Theta^*) \|_\text{op}$ (Lemma~\ref{lem:opnormbound}) and $\| \Delta \|_\text{nuc}$ (Lemma~\ref{lem:decomp}) and rearranging would result in a bound for the error $\| \Delta \|_F^2$. However, $\cL$ is not strongly convex: if we do not restrict to low-rank matrices, then there are multiple matrices that agree with the incomplete observations. Therefore, we'll instead show a form of restricted strong convexity: in particular, we'll show (in Lemma~\ref{lem:rsc}) that the quantity $\cL(\Theta^* + \Delta) - \cL(\Theta^*) - \langle \nabla \cL(\Theta^*), \Delta \rangle$ concentrates around $\frac{1}{d^2} \| \Delta \|_F^2$, but with deviation terms depending on $\| \Delta \|_\text{nuc}$ and $\| \Delta \|_\text{max}$. Therefore, recovery will depend on $\| \Delta \|_\text{nuc}$ being small, which follows from the condition that the regularization strength $\lambda$ is large enough (Lemma~\ref{lem:decomp}), and the entries of $\Delta$ being bounded, which follows by our assumption that $\| X \|_\text{max}^2 \leq \alpha$. 

We'll conclude this outline by providing caricatures of each of the lemmas and showing how they come together to produce the desired bound.
\begin{enumerate}[(a)]
    \item Section~\ref{subsec:opnorm} (operator norm bound):
    \begin{align*}
        \| \nabla \cL(\Theta^*) \|_\text{op} \lesssim \alpha \sqrt{\frac{\log d}{dm}}.
    \end{align*}
    \item Section~\ref{subsec:rsc} (restricted strong convexity): 
    \begin{align*}
        \cL(\Theta^* + \Delta) - \cL(\Theta^*) - \langle \nabla \cL(\Theta^*), \Delta \rangle \geq  \frac{1}{d^2} \| \Delta \|_F^2 - c \alpha \| \Delta \|_\text{nuc} \sqrt{\frac{\log d}{dm}}.
    \end{align*}
    \item Section~\ref{subsec:decomp} (decomposability): 
    \begin{align*}
        \text{If $\lambda \geq 2\| \nabla \cL (\Theta^*) \|_\text{op}$, then $\| \Delta \|_\text{nuc} \lesssim \sqrt{r} \| \Delta \|_F$.}
    \end{align*}
\end{enumerate}
Then, from these lemma caricatures, we can first apply restricted strong convexity as follows:
\begin{align*}
    0 &\geq \cL(\Theta^* + \Delta) - \cL(\Theta^*) - \lambda \| \Delta \|_\text{nuc} \\
    &\overset{(i)}{\geq} \boxed{ \langle \nabla \cL(\Theta^*), \Delta \rangle + \frac{1}{d^2} \| \Delta \|_F^2 - c \alpha \| \Delta \|_\text{nuc} \sqrt{\frac{\log d}{dm}} } - \lambda \| \Delta \|_\text{nuc} \\
    &\overset{(ii)}{\geq} \boxed{ -\| \nabla \cL(\Theta^*) \|_\text{op} \| \Delta \|_\text{nuc} } + \frac{1}{d^2} \| \Delta \|_F^2 - c \alpha \| \Delta \|_\text{nuc} \sqrt{\frac{\log d}{dm}} - \lambda \| \Delta \|_\text{nuc} \\
    &\overset{(iii)}{\geq} \boxed{ -\frac{\lambda}{2} \| \Delta \|_\text{nuc} } + \frac{1}{d^2} \| \Delta \|_F^2 - c \alpha \| \Delta \|_\text{nuc} \sqrt{\frac{\log d}{dm}} - \lambda \| \Delta \|_\text{nuc}
\end{align*}
where (i) follows from Lemma~\ref{lem:rsc} (restricted strong convexity), (ii) from Holder's inequality, and (iii) from choosing $\lambda \geq 2 \| \nabla \cL (\Theta^*) \|_\text{op}$. Next, we can combine the terms with $\lambda$ and use our upper bound on $\| \Delta \|_\text{nuc}$, producing
\begin{align*}
    &= \frac{1}{d^2} \| \Delta \|_F^2 - \| \Delta \|_\text{nuc}\left( c \alpha \sqrt{\frac{\log d}{dm}} + \frac{3}{2} \lambda \right) \\
    &\overset{(iv)}{=} \frac{1}{d^2} \| \Delta \|_F^2 - \| \Delta \|_\text{nuc} \left( c \alpha \sqrt{\frac{\log d}{dm}} + \boxed{ c' \alpha \sqrt{\frac{\log d}{dm}}} \right) \\
    &\overset{(v)}{\geq} \frac{1}{d^2} \| \Delta \|_F^2 - \boxed{ \sqrt{r} \| \Delta \|_F } c'' \alpha \sqrt{\frac{\log d}{dm}}
\end{align*}
 where (iv) follows from choosing $\lambda = O(\| \nabla \cL (\Theta^*) \|_\text{op})$ and Lemma~\ref{lem:opnormbound} (operator norm bound), and (v) from Lemma~\ref{lem:decomp} (nuclear norm bound). Finally, rearranging produces $\frac{1}{d} \| \Delta \|_F \lesssim \alpha \sqrt{\frac{rd\log d}{m}}$ as desired. Note that this calculation is not a proof; please see Section~\ref{subsec:thmproof} for the full proof.

\subsection{Operator norm bound}\label{subsec:opnorm}

In this section, we use concentration arguments to upper bound the operator norm $\| \nabla \cL (\Theta^*) \|_\text{op}$ with high probability (where the randomness is over the sampled indices). Recall that $\cL$ is defined in Equation~\ref{eqn:loss}; a quick calculation reveals that
\begin{align*}
\nabla \cL (\Theta^*) &= \frac{1}{m} \sum_{i=1}^m (\Theta^*_{a(i), b(i)} - X_{i,a(i)}X_{i,b(i)}) \tilde E_{a(i), b(i)}  \\
&+ \frac{1}{2m} \sum_{i=1}^m (\Theta^*_{a(i), a(i)} - X_{i,a(i)}^2) E_{a(i), a(i)} \\
&+ \frac{1}{2m} \sum_{i=1}^m (\Theta^*_{b(i), b(i)} - X_{i,b(i)}^2) E_{b(i), b(i)},
\end{align*}
where we use $\tilde E_{a(i), b(i)}$ to denote the symmetric mask $\frac{1}{2}(E_{a(i), b(i)} + E_{b(i), a(i)})$.

\begin{lem}\label{lem:opnormbound} Given matrices $X = UV^T \in \RR^{m \times d}$ and $\Theta^* = \frac{1}{m} X^TX \in \RR^{d \times d}$, suppose that $X$ is bounded by $\| X \|_\text{max}^2 \leq \alpha$. Also, let $(a(1), b(1)), ..., (a(m), b(m))$ denote indices sampled i.i.d.\ uniformly from the set $\{(i,j) : i, j \in [d], i \neq j\}$. Then, for $\cL$ defined in Equation~\ref{eqn:loss}, we have that
\begin{align*}
    \| \nabla \cL(\Theta^*) \|_\text{op} \leq 8 \alpha \sqrt{\frac{\log d + \delta}{dm}}
\end{align*}
with probability $\geq 1 - e^{-\delta}$, for $m \geq d(\log d + \delta)$.
\end{lem}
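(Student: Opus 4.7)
My approach will be to apply the matrix Bernstein inequality directly to each of the three sums that make up $\nabla\cL(\Theta^*)$. The first key observation is that although a single summand $T_i := (\Theta^*_{a(i),b(i)} - X_{i,a(i)}X_{i,b(i)})\tilde E_{a(i),b(i)}$ is \emph{not} mean-zero after averaging over $(a(i),b(i))$ (its conditional mean involves $u_i u_i^T$ rather than $\bar S$), the \emph{sum} of conditional means vanishes: swapping sums and using the identity $\Theta^*_{j_1,j_2} = \frac{1}{m}\sum_i X_{i,j_1}X_{i,j_2}$ gives $\sum_i \EE T_i = 0$. So I may replace each $T_i$ by its centered version $Z_i := T_i - \EE T_i$ without changing the sum, producing $m\nabla\cL(\Theta^*) = \sum_i Z_i$ as a sum of independent mean-zero random matrices, which is exactly the setting for matrix Bernstein. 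The same reduction applies to each of the two diagonal sums.

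Next I would split the gradient into its off-diagonal portion (the $\tilde E_{a(i),b(i)}$ terms) and its two diagonal portions (the $E_{a(i),a(i)}$ and $E_{b(i),b(i)}$ terms), and bound each separately. For the off-diagonal sum, the uniform bound is $\|Z_i\|_\text{op} \lesssim \alpha$, using $|\Theta^*_{ab} - X_{i,a}X_{i,b}| \le 2\alpha$ (both terms are bounded by $\alpha$) together with $\|\tilde E_{ab}\|_\text{op} = 1/2$. The crucial variance calculation exploits $\tilde E_{ab}^2 = \tfrac{1}{4}(E_{aa} + E_{bb})$: since $(a,b)$ is uniform on off-diagonal pairs, $\EE[E_{aa} + E_{bb}] = \tfrac{2}{d} I$, so $\bigl\|\sum_i \EE Z_i^2\bigr\|_\text{op} \lesssim m\alpha^2/d$. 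Matrix Bernstein then yields, with probability at least $1 - e^{-\delta}$,
\[
\Big\|\sum_i Z_i\Big\|_\text{op} \lesssim \sqrt{\frac{m\alpha^2(\log d + \delta)}{d}} \;+\; \alpha(\log d + \delta).
\]
Dividing by $m$ and using the hypothesis $m \geq d(\log d + \delta)$ to absorb the linear-in-$\log d$ tail into the square-root term produces the target rate $\alpha\sqrt{(\log d + \delta)/(dm)}$. The two diagonal sums admit the same analysis with $\|E_{aa}\|_\text{op} = 1$ and $\EE E_{a(i),a(i)} = \tfrac{1}{d} I$, and contribute the same order; the three pieces combine by triangle inequality into the constant $8$.

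The main obstacle I expect is the variance calculation: the crucial $1/d$ improvement over a naive bound comes from the fact that each $\tilde E_{a,b}$ is supported on a single off-diagonal entry but averages over the random index to a multiple of $I/d$; losing this factor would give only $\alpha\sqrt{(\log d)/m}$ instead of the desired $\alpha\sqrt{(\log d)/(dm)}$. A secondary subtlety is that the centering step must be justified carefully — individual summands are not mean-zero, so one has to verify the global cancellation before invoking Bernstein. As the main text hints, an alternative route goes through permutation invariance plus Rademacher symmetrization, but I find the direct matrix Bernstein approach cleaner here, since the mean-zero property of $\nabla\cL(\Theta^*)$ is available immediately from the definition of $\Theta^*$ as an average over rows.
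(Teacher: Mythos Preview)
Your approach is correct and genuinely different from the paper's. Both routes split $\nabla\cL(\Theta^*)$ into the same three pieces and both end at a matrix Bernstein bound with variance proxy $O(m\alpha^2/d)$ and range $O(\alpha)$; the difference is how the mean-zero structure is produced. The paper, noting that individual summands $T_i$ are not mean-zero, exploits permutation invariance of the expectation under relabeling of the rows $u_i$ to rewrite $\Theta^*_{a,b}-X_{i,a}X_{i,b}$ as a de-meaned quantity, then applies Rademacher symmetrization before entering Bernstein. You instead observe directly that although $\EE T_i \neq 0$, the identity $\Theta^* = \tfrac{1}{m}X^TX$ forces $\sum_i \EE T_i = 0$ entrywise, so replacing each $T_i$ by $Z_i = T_i - \EE T_i$ leaves the sum unchanged and produces independent mean-zero summands immediately. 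This is a cleaner and shorter argument: it removes the symmetrization machinery (and its factor-of-two loss in the exponent) at the cost of carrying the deterministic centering term $\EE T_i$, whose operator norm is $O(\alpha/d)$ and is harmless in both the range bound and the variance estimate $\EE Z_i^2 \preceq \EE T_i^2$. The paper's symmetrization route, on the other hand, has the mild advantage of reducing to the moment bound for $\eps_i X_{i,a}X_{i,b}\tilde E_{a,b}$, where the coefficient is bounded by $\alpha$ rather than $2\alpha$. Either way the final rate and even the stated constant are the same order, and the absorption of the linear Bernstein tail via $m \ge d(\log d + \delta)$ is identical in both arguments.
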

\begin{proof}
    We'll first divide the bound into three parts, such that by triangle we have that
    \begin{align*}
        \| \nabla \cL (\Theta^*) \|_\text{op} &\leq \left\| \frac{1}{m} \sum_{i=1}^m (\Theta^*_{a(i), b(i)} - X_{i,a(i)}X_{i,b(i)}) \tilde E_{a(i), b(i)}\right\|_\text{op} \\
        &+ \left\| \frac{1}{2m} \sum_{i=1}^m (\Theta^*_{a(i), a(i)} - X_{i,a(i)}^2) E_{a(i), a(i)} \right\|_\text{op} \\
        &+ \left\| \frac{1}{2m} \sum_{i=1}^m (\Theta^*_{b(i), b(i)} - X_{i,b(i)}^2) E_{b(i), b(i)} \right\|_\text{op}.
    \end{align*}
    Then, we can bound each part separately with high probability and then use the union bound to bound their sum. The bound for each term will proceed as follows: first, writing the sum as $\| \frac{1}{m} \sum_{i=1}^m Q_i \|_\text{op}$ for ease of notation, we can use Markov's inequality to produce the Chernoff bound
    \begin{align*}
        \PP\left( \left\| \frac{1}{m} \sum_{i=1}^m Q_i \right\|_\text{op} \geq t \right) \leq \EE \left[\exp\left\{ \xi \left\| \sum_{i=1}^m Q_i \right\|_\text{op} \right\} \right] e^{-\xi m t}.
    \end{align*}
    Next, we'll use symmetrization to bound 
    \begin{align*}
        \EE \left[\exp\left\{ \xi \left\| \sum_{i=1}^m Q_i \right\|_\text{op} \right\} \right] \leq  \EE \left[\exp\left\{2 \xi \left\| \sum_{i=1}^m \eps_i \tilde Q_i \right\|_\text{op} \right\} \right],
    \end{align*}
    where $\eps_i$ are i.i.d.\ Radamacher random variables (i.e.\ uniform over the set $\{-1, +1\}$). Finally, we can bound the moments $\EE (\eps_i \tilde Q_i)^{2n}$ to bound this expectation, leading to a matrix Bernstein bound.

    \textbf{Radamacher symmetrization}: we'll first apply the symmetrization argument to the first term; the other two terms proceed similarly. First, note that by the definitions of $X = UV^T$ and $\Theta^* = \frac{1}{m} X^TX$ we can write
    \begin{align*}
        X_{i,a(i)}X_{i,b(i)} - \Theta^*_{a(i), b(i)} = \left\langle E_{a(i), b(i)}, V\left(u_i u_i^T - \frac{1}{m} \sum_{i=1}^m u_i u_i^T\right)V^T \right\rangle,
    \end{align*}
    where $u_i$ denotes the $i$th row of $U$. Substituting this expression into the expectation $\EE \left[\exp\left\{ \xi \left\| \sum_{i=1}^m Q_i \right\|_\text{op} \right\} \right]$, we have
    \begin{align*}
        &\EE \left[\exp\left\{ \xi \left\| \sum_{i=1}^m (\Theta^*_{a(i), b(i)} - X_{i,a(i)}X_{i,b(i)}) \tilde E_{a(i),b(i)} \right\|_\text{op} \right\} \right] \\
        &\qquad = \EE \left[\exp\left\{ \xi \left\| \sum_{i=1}^m \left\langle E_{a(i), b(i)}, V\left(u_i u_i^T - \frac{1}{m} \sum_{i=1}^m u_i u_i^T\right)V^T \right\rangle \tilde E_{a(i),b(i)} \right\|_\text{op} \right\} \right].
    \end{align*}
    Next, because the random indices are drawn i.i.d., this expectation is invariant to the $u_i$'s being permuted with each other (i.e.\ sampling a random permutation $\sigma$ and setting $u_i' = u_{\sigma(i)}$). Therefore, we can take an expectation with respect to sampling a random permutation $\sigma$ while also replacing $\frac{1}{m} \sum_{i=1}^m u_i u_i^T = \EE_{\tilde \sigma} u_{\tilde \sigma(i)} u_{\tilde \sigma(i)}^T$, resulting in
    \begin{align*}
        &= \EE_{a(i), b(i)} \boxed{\EE_\sigma} \left[\exp\left\{ \xi \left\| \sum_{i=1}^m \langle E_{a(i), b(i)}, V  \left( \boxed{u_{\sigma(i)} u_{\sigma(i)}^T - \EE_{\tilde \sigma} [u_{\tilde \sigma(i)} u_{\tilde \sigma(i)}^T] }  \right) V^T \rangle \tilde E_{a(i), b(i)} \right\|_\text{op} \right\} \right],
    \end{align*}
    where $\EE_Z$ denotes taking the expectation with respect to $Z$. At this point, we can apply the definition of the operator norm and proceed with standard symmetrization arguments, resulting in the following chain of inequalities:
    \begin{align*}
        &\text{Replacing operator norm $\| C \|_\text{op}$ with $\sup_{\|z\|_2=1} \langle z, Cz \rangle$:} \\
        &= \EE_{a(i), b(i)} \EE_\sigma \left[\exp\left\{ \xi \sup_{\|z\|_2 = 1} \left\langle z, \sum_{i=1}^m \langle E_{a(i), b(i)}, V\left(u_{\sigma(i)} u_{\sigma(i)}^T - \EE_{\tilde \sigma} [u_{\tilde \sigma(i)} u_{\tilde \sigma(i)}^T] \right)V^T \rangle \tilde E_{a(i), b(i)}  z \right\rangle \right\} \right] \\
        &\text{Pulling out the expectation via $\Phi(\sup_{g \in \cG} \EE|g(X)|) \leq \EE \Phi(\sup_{g \in \cG} |g(X)|)$ for $\Phi = \exp$ convex and non-decreasing:} \\
        &\leq \EE_{a(i), b(i)} \EE_\sigma \boxed{ \EE_{\tilde \sigma} } \left[\exp\left\{ \xi \sup_{\|z\|_2 = 1} \left\langle z, \sum_{i=1}^m \langle E_{a(i), b(i)}, V\left(u_{\sigma(i)} u_{\sigma(i)}^T - u_{\tilde \sigma(i)} u_{\tilde \sigma(i)}^T \right)V^T \rangle \tilde E_{a(i), b(i)} z \right\rangle \right\} \right] \\
        &\text{We can insert Radamacher random variables $\eps$ because $\sigma$ and $\tilde \sigma$ are i.i.d.:} \\
        &= \boxed{ \EE_{\eps} }\EE_{a(i), b(i)} \EE_\sigma \EE_{\tilde \sigma} \left[\exp\left\{ \xi \sup_{\|z\|_2 = 1} \left\langle z, \sum_{i=1}^m \langle E_{a(i), b(i)}, V\boxed{ \eps_i } \left(u_{\sigma(i)} u_{\sigma(i)}^T - u_{\tilde \sigma(i)} u_{\tilde \sigma(i)}^T \right)V^T \rangle \tilde E_{a(i), b(i)} z \right\rangle \right\} \right] \\
        &\text{Splitting the sum via Jensen's inequality:} \\
        &\leq \EE_{\eps, a(i), b(i)} \EE_\sigma \left[\frac{1}{2} \exp\left\{ 2\xi \sup_{\|z\|_2 = 1} \left\langle z, \sum_{i=1}^m \eps_i \langle E_{a(i), b(i)}, V\left(u_{\sigma(i)} u_{\sigma(i)}^T \right)V^T \rangle \tilde E_{a(i), b(i)} z \right\rangle \right\} \right] \\
        &\phantom{=} + \EE_{\eps, a(i), b(i)} \EE_{\tilde \sigma} \left[\frac{1}{2} \exp\left\{ 2\xi \sup_{\|z\|_2 = 1} \left\langle z, \sum_{i=1}^m \eps_i \langle E_{a(i), b(i)}, V\left(u_{\tilde \sigma(i)} u_{\tilde \sigma(i)}^T \right)V^T \rangle \tilde E_{a(i), b(i)} z \right\rangle \right\} \right] \\
        &\text{Removing $\sigma$ and $\tilde \sigma$ by again applying invariance of the expectation to permutation:} \\
        &= \EE_{\eps, a(i), b(i)} \left[\exp\left\{ 2\xi \sup_{\|z\|_2 = 1} \left\langle z, \sum_{i=1}^m \eps_i \langle E_{a(i), b(i)}, V u_i u_i^T V^T \rangle \tilde E_{a(i), b(i)} z \right\rangle \right\} \right]\nonumber \\
        &= \EE \left[\exp\left\{ 2\xi \left\| \sum_{i=1}^m \eps_i X_{i,a(i)} X_{i,b(i)} \tilde E_{a(i), b(i)} \right\|_\text{op} \right\} \right].
    \end{align*}
    We can proceed in exactly the same way for the diagonal terms, resulting in the following inequalities:
    \begin{align*}
        \EE \left[\exp\left\{ \xi \left\| \sum_{i=1}^m (\Theta^*_{a(i), b(i)} - X_{i,a(i)}X_{i,b(i)}) \tilde E_{a(i), b(i)}  \right\|_\text{op} \right\} \right] &\leq \EE \left[\exp\left\{ 2\xi \left\| \sum_{i=1}^m \eps_i X_{i,a(i)} X_{i,b(i)} \tilde E_{a(i), b(i)} \right\|_\text{op} \right\} \right], \\
        \EE \left[\exp\left\{ \xi \left\| \sum_{i=1}^m \frac{1}{2} (\Theta^*_{a(i), a(i)} - X_{i,a(i)}^2) E_{a(i), a(i)} \right\|_\text{op} \right\} \right] &\leq \EE \left[\exp\left\{ 2\xi \left\| \sum_{i=1}^m \frac{1}{2} \eps_i X_{i,a(i)}^2 E_{a(i), a(i)} \right\|_\text{op} \right\} \right].
    \end{align*}
    
    \textbf{Bounding moments}: at this point, we can apply standard matrix Bernstein arguments. First note that for symmetric independent random matrices $Q_i$, we have
    \begin{align*}
        \EE e^{2\xi \| \sum_i Q_i \|_{\text{op}}} &\overset{(i)}{=} \EE \| e^{2\xi \sum_i Q_i} \|_{\text{op}} \overset{(ii)}{\leq} \EE \text{tr}(e^{2\xi \sum_i Q_i}) = \text{tr} (\EE e^{2\xi \sum_i Q_i}) \overset{(iii)}{\leq} \text{tr} (e^{\sum_i \log \EE e^{2\xi Q_i}}),
    \end{align*}
    where (i) follows from the spectral mapping theorem, (ii) from the fact that the matrix exponential $e^Q = \sum_{k=0}^\infty \frac{Q^k}{k!}$ is positive semidefinite, and (iii) from Lemma~{6.13} of \citet{wainwright2019high}. Therefore, it suffices to bound each $\EE e^{2\xi Q_i}$. For ease of notation, we'll define the following random matrices, which are symmetric:
    \begin{align*}
        R_i &= \frac{1}{2} \eps_i X_{i,a(i)} X_{i,b(i)} (E_{a(i), b(i)} + E_{b(i), a(i)}) \\
        S_i &= \eps_i X_{i,a(i)}^2 E_{a(i), a(i)}.
    \end{align*}
    Then, to bound $\EE e^{(2\xi R_i)}$ and $\EE e^{(2\xi S_i)}$, we can bound $\EE R_i^{2n}$ and $\EE S_i^{2n}$ (note that the odd moments are zero because $\eps_i$ is symmetric around the origin). Using our assumption that $\|X\|_\text{max}^2 \leq \alpha$, we can compute these moments as follows:
    \begin{align*}
        \EE R_i^{2n} &= \frac{1}{2^{2n}} X_{i,a(i)}^{2n} X_{i,b(i)}^{2n} \frac{2}{d} I_d \\
        &\preceq \alpha^{2n} \frac{1}{d} I_d \\
        \EE S_i^{2n} &= (X_{i,a(i)}^2)^{2n} \frac{1}{d} I_d \\
        &\preceq \alpha^{2n} \frac{1}{d} I_d,
    \end{align*}
    so $R_i$ and $S_i$ both satisfy the matrix Bernstein condition with $b = \alpha$ and $\text{var}(R_i) \preceq \alpha^2 \frac{1}{d} I_d$. Then, by a matrix Bernstein bound (see, e.g., Lemma~{6.11} of \citet{wainwright2019high}), we have
    \begin{align*}
        \EE e^{2\xi R_i} &\preceq \exp\left\{ \frac{2\xi^2 \text{var}(R_i)}{1 - b|\xi|} \right\}\text{ for all $|\xi| < 1/b$} \\
        &\preceq \exp\left\{ \frac{2\xi^2 \alpha^2 I_d}{d(1 - \alpha|\xi|)} \right\}\text{ for all $|\xi| < 1/\alpha$},
    \end{align*}
    with the same inequality holding for $S_i$. Substituting into the original inequality, for all $|\xi| < 1/\alpha$ we have
    \begin{align*}
        \PP\left( \left\| \frac{1}{m} \sum_{i=1}^m (\Theta^*_{a(i), b(i)} - X_{i, a(i)}X_{i, b(i)}) \tilde E_{a(i), b(i)} \right\|_\text{op} \geq t \right) &\leq \text{tr} (e^{\sum_i \log \EE e^{2\xi R_i}}) e^{-\xi m t} \\
        &\leq \text{tr} \left(\exp\left\{ \frac{2m\xi^2 \alpha^2 I_d}{d(1 - \alpha|\xi|)} \right\}\right) e^{-\xi m t} \\
        &\leq d \exp\left\{ \frac{2m\xi^2 \alpha^2}{d(1 - \alpha|\xi|)} \right\} e^{-\xi m t},
    \end{align*}
    where the last line follows from the fact that $\text{tr}(e^{R}) \leq de^{\|R\|_\text{op}}$ for symmetric matrices $R \in \RR^{d \times d}$. Setting $\xi = \frac{t}{4\alpha^2/d + \alpha t}$ produces the bound
    \begin{align*}
        \PP\left( \left\| \frac{1}{m} \sum_{i=1}^m (\Theta^*_{a(i), b(i)} - X_{i, a(i)}X_{i, b(i)}) \tilde E_{a(i), b(i)} \right\|_\text{op} \leq t \right) \geq 1 - d \text{exp}\left\{ - \frac{m t^2}{8\alpha^2/d + 2\alpha t} \right\},
    \end{align*}
    with the same bound holding for the second and third terms. Finally, for all three bounds to hold simultaneously with probability $\geq 1 - e^{-\delta}$, we can set
    \begin{align*}
        t = 4 \text{max}\left( 2\alpha \sqrt{\frac{\log d + \delta}{dm}}, \alpha \frac{\log d + \delta}{m} \right),
    \end{align*}
    which is dominated by the first term for $m \geq d (\log d + \delta)$.
\end{proof}

\subsection{Restricted strong convexity}\label{subsec:rsc}

In this section, we will lower bound the quantity $\cL(\Theta^* + \Delta) - \cL(\Theta^*) - \langle \nabla \cL(\Theta^*), \Delta \rangle$ with high probability (where the randomness is over the sampled indices). In particular, we'll show that this quantity concentrates around $\frac{1}{d^2} \| \Delta \|_F^2$ through careful analysis of the diagonal and off-diagonal terms, along with peeling arguments similar to those in Theorem~{10.17} of \citet{wainwright2019high} and Theorem~{1} of \citet{negahban2012restricted}. Recall that $\cL$ is defined in Equation~\ref{eqn:loss}; a quick calculation reveals that
\begin{align*}
    \cL(\Theta^* + \Delta) - \cL(\Theta^*) - \langle \nabla \cL(\Theta^*), \Delta \rangle &= \frac{1}{2m}\sum_{i=1}^m [\Delta_{a(i), b(i)}^2 + \Delta_{b(i), a(i)}^2] + [\Delta_{a(i), a(i)}^2 + \Delta_{b(i), b(i)}^2].
\end{align*}
For a matrix $\Delta \in \RR^{d \times d}$, we'll use $P_\text{diag}(\Delta)$ to refer to $\Delta$ with the off-diagonal terms set to zero, and $P_\text{off-diag}(\Delta) = \Delta - P_\text{diag}(\Delta)$ to refer to $\Delta$ with the diagonal set to zero.

\begin{lem}\label{lem:rsc}
Let $(a(1), b(1)), ..., (a(m), b(m))$ be random indices sampled i.i.d.\ uniformly from the set $\{(i,j) : i, j \in [d], i \neq j\}$. Also, let $m \geq d \log d$. Then, for universal constants $c_1$, $c_2$, and $c_3$, we have that for $\cL$ defined in Equation~\ref{eqn:loss}, the following bound holds uniformly for all matrices $\Delta \in \RR^{d \times d}$, with probability $\geq 1 - 2e^{-\delta}$:
\begin{align*}
    \cL(\Theta^* + \Delta) - \cL(\Theta^*) - \langle \nabla \cL(\Theta^*), \Delta \rangle &\geq \frac{1}{d^2} \| \Delta \|_F^2 \\&- c_1 \| \Delta \|_\text{max} \| \Delta \|_\text{nuc} \sqrt{\frac{\log d}{dm}} \\
    &- c_2 \| \Delta \|_\text{max} \| \Delta \|_F \sqrt{\frac{\delta}{dm}} \\
    &- c_3 \| \Delta \|_\text{max}^2 \frac{\delta}{m}.
\end{align*}
\end{lem}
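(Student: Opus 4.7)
By the quadratic structure of $\cL$, a direct computation yields
\begin{align*}
&\cL(\Theta^* + \Delta) - \cL(\Theta^*) - \langle \nabla \cL(\Theta^*), \Delta \rangle \\
&\quad = \frac{1}{2m}\sum_{i=1}^m \left[\Delta_{a(i),b(i)}^2 + \Delta_{b(i),a(i)}^2\right] + \frac{1}{2m}\sum_{i=1}^m \left[\Delta_{a(i),a(i)}^2 + \Delta_{b(i),b(i)}^2\right].
\end{align*}
The off-diagonal sum has expectation $\frac{1}{d(d-1)} \|P_\text{off-diag}(\Delta)\|_F^2$ and the diagonal sum has expectation $\frac{1}{d} \|P_\text{diag}(\Delta)\|_F^2$. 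Since $\frac{1}{d(d-1)} \geq \frac{1}{d^2}$ and $\frac{1}{d} \geq \frac{1}{d^2}$, the combined expectation dominates $\frac{1}{d^2}\|\Delta\|_F^2$, which is precisely the target main term. The proof thus reduces to showing one-sided deviation inequalities for each empirical sum, holding uniformly in $\Delta$.

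For the off-diagonal sum, set $Z(\Delta) := \frac{1}{m}\sum_{i=1}^m \Delta_{a(i),b(i)}^2 - \frac{1}{d(d-1)}\|P_\text{off-diag}(\Delta)\|_F^2$. Following the Negahban--Wainwright framework, I would bound $\EE\sup_\Delta |Z(\Delta)|$ by Rademacher symmetrization and then apply Ledoux--Talagrand contraction: because $x \mapsto x^2$ is $2\|\Delta\|_\text{max}$-Lipschitz on $[-\|\Delta\|_\text{max}, \|\Delta\|_\text{max}]$, contraction reduces the problem to a linear Rademacher process, which by nuclear/operator duality is controlled as
\begin{align*}
\left|\left\langle \Delta,\ \frac{1}{m}\sum_{i=1}^m \eps_i \tilde E_{a(i),b(i)}\right\rangle\right| \leq \|\Delta\|_\text{nuc} \left\|\frac{1}{m}\sum_{i=1}^m \eps_i \tilde E_{a(i),b(i)}\right\|_\text{op}.
\end{align*}
The operator norm on the right is of order $\sqrt{\log d/(dm)}$ by a matrix Bernstein argument essentially identical to the one in the proof of Lemma~\ref{lem:opnormbound}. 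Talagrand's concentration inequality for empirical processes then converts the in-expectation bound into a high-probability statement, and a peeling argument over shells indexed by $\|\Delta\|_\text{nuc}/\|\Delta\|_\text{max}$ (as in Theorem~10.17 of \citet{wainwright2019high} and Theorem~1 of \citet{negahban2012restricted}) extends it uniformly to all $\Delta$, producing the $c_1\|\Delta\|_\text{max}\|\Delta\|_\text{nuc}\sqrt{\log d/(dm)}$ term.

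For the diagonal sum, each summand $\Delta_{a(i),a(i)}^2$ lies in $[0, \|\Delta\|_\text{max}^2]$ with variance at most $\|\Delta\|_\text{max}^2 \|\Delta\|_F^2/d$, so Bernstein's inequality for a fixed $\Delta$ directly yields a deviation of the form $\|\Delta\|_\text{max}\|\Delta\|_F\sqrt{\delta/(dm)} + \|\Delta\|_\text{max}^2\delta/m$, matching the last two error terms. Extending this bound uniformly to all $\Delta$ is easier than in the off-diagonal case because the diagonal sum depends only on the $d$ diagonal entries of $\Delta$: a simple covering/rescaling argument on a $d$-dimensional ball suffices, and no nuclear-norm bound is needed. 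A union bound combines the two events at total failure probability $2e^{-\delta}$.

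The main technical obstacle, I expect, is executing the peeling for the off-diagonal process cleanly so that the deviation scales jointly as $\|\Delta\|_\text{max}\cdot\|\Delta\|_\text{nuc}$ — with contraction supplying the first factor and nuclear/operator duality supplying the second — rather than collapsing into a single norm on a fixed ball. The constants emerging from Talagrand's inequality, the contraction lemma, and matrix Bernstein must be arranged to fit within the slack $\frac{1}{d(d-1)} \geq \frac{1}{d^2}$, so that subtracting the concentration error still leaves the main $\frac{1}{d^2}\|\Delta\|_F^2$ term intact.
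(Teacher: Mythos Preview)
Your overall framework matches the paper's: split into off-diagonal and diagonal pieces, bound the expected supremum of each by symmetrization plus Ledoux--Talagrand contraction plus nuclear/operator duality plus matrix Bernstein, upgrade to high probability via a functional Bernstein/Talagrand inequality, and make the bound uniform by peeling. Two concrete points need correction, however.

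First, you attribute the $c_2$ and $c_3$ terms entirely to the diagonal sum, but the off-diagonal concentration already produces them. The functional Bernstein inequality for the off-diagonal process has the shape $Z(D,\rho) \lesssim \EE Z(D,\rho) + \sigma\sqrt{\delta/m} + b\,\delta/m$: the expectation term yields $c_1$, the variance parameter $\sigma^2 \le \|\Delta\|_{\max}^2\,\|P_{\text{off-diag}}(\Delta)\|_F^2/(d(d-1))$ yields $c_2$, and the uniform bound $b=\|\Delta\|_{\max}^2$ yields $c_3$. Since $\sigma$ scales with the Frobenius norm, peeling only over $\|\Delta\|_{\text{nuc}}/\|\Delta\|_{\max}$ as you propose leaves $\sigma$ uncontrolled within a shell; at best you recover a $c_2$-type term with $\|\Delta\|_{\text{nuc}}$ in place of $\|\Delta\|_F$, which is strictly weaker than the lemma as stated. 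The paper resolves this by peeling over a \emph{two-dimensional} grid of shells, indexed simultaneously by $\|P_{\text{off-diag}}(\Delta)\|_F$ and $\|P_{\text{off-diag}}(\Delta)\|_{\text{nuc}}$ (roughly $(\log d)^2$ shells in total, absorbed into $\delta'$), and this is precisely what recovers the Frobenius scaling in the $c_2$ term.

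Second, your diagonal argument --- pointwise Bernstein followed by a covering of a $d$-dimensional ball --- fails as written: an $\epsilon$-net of a ball in $\RR^d$ has cardinality of order $(C/\epsilon)^d$, so a union bound over it inflates the exponent by order $d$, which is far too large. The paper does not shortcut the diagonal case; it runs exactly the same functional-Bernstein plus contraction plus two-dimensional-peeling machinery on the diagonal sum, the only changes being the variance bound $\|\Delta\|_{\max}^2\,\|P_{\text{diag}}(\Delta)\|_F^2/d$ and the sampled masks $E_{a(i),a(i)}$ in place of $\tilde E_{a(i),b(i)}$.
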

\begin{proof}
    To show this bound, we'll show the following two bounds for the off-diagonal and diagonal terms, which each hold with probability $\geq 1 - e^{-\delta}$ (and therefore together with probability $\geq 1 - 2e^{-\delta}$):
    \begin{enumerate}[(a)]
        \item \textbf{Off-diagonal terms}: with probability $\geq 1 - e^{-\delta}$, the following holds uniformly for all $\Delta \in \RR^{d \times d}$:
        \begin{align*}
            \frac{1}{m}\sum_{i=1}^m \frac{1}{2}[\Delta_{a(i), b(i)}^2 + \Delta_{b(i), a(i)}^2] &\geq \frac{1}{d(d-1)} \| P_\text{off-diag}(\Delta) \|_F^2 \\
            &- c_1 \| P_\text{off-diag}(\Delta) \|_\text{max} \| P_\text{off-diag}(\Delta) \|_\text{nuc} \sqrt{\frac{\log d}{dm}} \\
            &- c_2 \| P_\text{off-diag}(\Delta) \|_\text{max} \| P_\text{off-diag}(\Delta) \|_F \sqrt{\frac{\delta}{d(d-1)m}} \\
            &- c_3 \| P_\text{off-diag}(\Delta) \|_\text{max}^2 \frac{\delta}{m}.
        \end{align*}
        \item \textbf{Diagonal terms}: with probability $\geq 1 - e^{-\delta}$, the following holds uniformly for all $\Delta \in \RR^{d \times d}$:
        \begin{align*}
            \frac{1}{m}\sum_{i=1}^m \frac{1}{2} [\Delta_{a(i)}^2 + \Delta_{b(i)}^2] &\geq \frac{1}{d} \| P_\text{diag}(\Delta) \|_F^2 \\
            &- c_1 \| P_\text{diag}(\Delta) \|_\text{max} \| P_\text{diag}(\Delta) \|_\text{nuc} \sqrt{\frac{\log d}{dm}} \\
            &- c_2 \| P_\text{diag}(\Delta) \|_\text{max} \| P_\text{diag}(\Delta) \|_F \sqrt{\frac{\delta}{dm}} \\
            &- c_3 \| P_\text{diag}(\Delta) \|_\text{max}^2 \frac{\delta}{m}.
        \end{align*}
    \end{enumerate}
    Note that the lemma follows from adding these two claims because we have the following inequalities:
    \begin{enumerate}[(i)]
        \item \textbf{Max}: the max of the entire matrix bounds the max of subsets of the matrix, so $\| P_\text{diag}(\Delta) \|_\text{max} \leq \| \Delta \|_\text{max}$ and $\| P_\text{off-diag}(\Delta) \|_\text{max} \leq \| \Delta \|_\text{max}$.
        \item \textbf{Nuclear norm}: the $P_\text{diag}$ operator reduces nuclear norm, so $\| P_\text{diag}(\Delta) \|_\text{nuc} \leq \| \Delta \|_\text{nuc}$ and $\| P_\text{off-diag}(\Delta) \|_\text{nuc} = \| \Delta - P_\text{diag}(\Delta) \|_\text{nuc} \leq 2\| \Delta \|_\text{nuc}$.
        \item \textbf{Frobenius norm terms}: because $ \| P_\text{off-diag}(\Delta) \|_F^2 + \| P_\text{diag}(\Delta) \|_F^2 = \| \Delta \|_F^2$, we have
        \begin{align*}
            \frac{1}{d(d-1)} \| P_\text{off-diag}(\Delta) \|_F^2 + \frac{1}{d} \| P_\text{diag}(\Delta) \|_F^2 &\geq \frac{1}{d^2} \| P_\text{off-diag}(\Delta) \|_F^2 + \frac{1}{d^2} \| P_\text{diag}(\Delta) \|_F^2 = \frac{1}{d^2} \| \Delta \|_F^2,
        \end{align*}
        lower bounding the first term, and we also have
        \begin{align*}
            \| P_\text{off-diag}(\Delta) \|_F \sqrt{\frac{1}{d(d-1)}} + \| P_\text{diag}(\Delta) \|_F \sqrt{\frac{1}{d}} &\leq \sqrt{ 2\| P_\text{off-diag}(\Delta) \|_F^2 \frac{1}{d(d-1)} + 2\| P_\text{diag}(\Delta) \|_F^2 \frac{1}{d} } \\
            &\leq \sqrt{ 2\| P_\text{off-diag}(\Delta) \|_F^2 \frac{1}{d} + 2\| P_\text{diag}(\Delta) \|_F^2 \frac{1}{d} }\\
            &= \sqrt{\frac{2}{d}} \| \Delta \|_F,
        \end{align*}
        which upper bounds the deviation.
    \end{enumerate}
    
    \textbf{Off-diagonal term bound}: We'll start by bounding the off-diagonal terms; the proof for the diagonal terms will proceed similarly. First, note that the inequality is scale-invariant, so WLOG we can assume $\| P_\text{off-diag}(\Delta) \|_\text{max} = \alpha$. Fixing some $D$ and $\rho$, let $\QQ(D, \rho)$ denote the set
    \begin{align*}
        \QQ(D, \rho) = \{ \Delta \in \RR^{d \times d} :\ &\| P_\text{off-diag}(\Delta) \|_\text{max} = \alpha,\ \| P_\text{off-diag}(\Delta) \|_F \leq D,\ \| P_\text{off-diag}(\Delta) \|_\text{nuc} \leq \rho \}.
    \end{align*}
    and let $Z(D, \rho)$ denote the largest deviation in $\QQ(D, \rho)$, or
    \begin{align*}
        Z(D, \rho) = \sup_{\Delta \in \QQ(D, \rho)} \left| \frac{1}{m}\sum_{i=1}^m \frac{1}{2}[\Delta_{a(i), b(i)}^2 + \Delta_{b(i), a(i)}^2]  - \frac{1}{d(d-1)} \| P_\text{off-diag}(\Delta) \|_F^2 \right|.
    \end{align*}
    We'll first produce a high probability upper bound on $Z(D, \rho)$ for fixed $D$ and $\rho$, and we'll then use a peeling argument to produce a high probability bound for general $D$ and $\rho$.

    \textbf{Bound}: First, note that each summand has expectation
    \begin{align*}
        \EE \left[ \langle \Delta, \tilde E_{a(i), b(i)} \rangle^2 \right] &= \frac{1}{d(d-1)} \| P_\text{off-diag}(\Delta) \|_F^2,
    \end{align*}
    so $Z(D, \rho)$ is an empirical process of the form $\sup_{g \in \cG} | \frac{1}{m} \sum_{i=1}^m g(X_i) - \EE g(X) |$. Furthermore, we have that each term is uniformly bounded $\frac{1}{2}[\Delta_{a(i), b(i)}^2 + \Delta_{b(i), a(i)}^2] \leq \alpha^2$, with uniformly bounded variance:
    \begin{align*}
        \text{var}\left( \langle \Delta, \tilde E_{a(i), b(i)} \rangle^2 \right) &= \EE \langle \Delta, \tilde E_{a(i), b(i)} \rangle^4 - \EE \langle \Delta, \tilde E_{a(i), b(i)} \rangle^2 \\
        &\leq \alpha^2 \EE \langle \Delta, \tilde E_{a(i), b(i)} \rangle^2 - \EE \langle \Delta, \tilde E_{a(i), b(i)} \rangle^2 \\
        &= (\alpha^2 - 1) \frac{1}{d(d-1)} \| P_\text{off-diag}(\Delta) \|_F^2 \\
        &\leq \alpha^2 \frac{1}{d(d-1)} D^2.
    \end{align*}
    Therefore, by a functional Bernstein inequality (Theorem~{3.27} of \citet{wainwright2019high}), we have
    \begin{align*}
        \PP\left( Z(D, \rho) \geq 2\EE Z(D, \rho) + 2 \sigma \sqrt{\frac{\delta'}{m}} + 2 b \frac{\delta'}{m}\right) &\leq e^{-\delta'}
    \end{align*}
    where $\sigma = \alpha D \sqrt{\frac{1}{d(d-1)}}$ and $b = \alpha^2$. Next, to bound the expectation $\EE Z(D, \rho)$, we have that
    \begin{align*}
        \EE \sup_{\Delta \in \QQ(D, \rho)} \left| \frac{1}{m}\sum_{i=1}^m \langle \tilde E_{a(i), b(i)}, \Delta \rangle^2 - \EE[ \langle \tilde E_{a(i), b(i)}, \Delta \rangle^2 ]  \right| &\overset{(i)}{\leq} 2 \EE \sup_{\Delta \in \QQ(D, \rho)} \left| \frac{1}{m}\sum_{i=1}^m \eps_i \langle \tilde E_{a(i), b(i)}, \Delta \rangle^2 \right| \\
        &\overset{(ii)}{=} 2 \EE \sup_{\Delta \in \QQ(D, \rho)} \left| \frac{1}{m}\sum_{i=1}^m \eps_i \langle \tilde E_{a(i), b(i)}, P_\text{off-diag}(\Delta) \rangle^2 \right| \\
        &\overset{(iii)}{\leq} 4 \alpha \EE \sup_{\Delta \in \QQ(D, \rho)} \left| \frac{1}{m}\sum_{i=1}^m \eps_i \langle \tilde E_{a(i), b(i)}, P_\text{off-diag}(\Delta) \rangle \right| \\
        &\overset{(iv)}{\leq} 4 \alpha \EE \sup_{\Delta \in \QQ(D, \rho)} \left| \left\| \frac{1}{m}\sum_{i=1}^m \eps_i \tilde E_{a(i), b(i)} \right\|_\text{op} \| P_\text{off-diag}(\Delta) \|_\text{nuc} \right| \\
        &\leq 4 \alpha \rho \EE \left\| \frac{1}{m}\sum_{i=1}^m \eps_i \tilde E_{a(i), b(i)} \right\|_\text{op} \\
        &\overset{(v)}{\leq} 64\alpha \rho \sqrt{\frac{\log d}{dm}},
    \end{align*}
    where (i) follows from Radamacher symmetrization, (ii) follows from the fact that $a(i) \neq b(i)$ so only off-diagonal terms of $\Delta$ are sampled, (iii) follows from the fact that $\| P_\text{off-diag}(\Delta) \|_\text{max} \leq \alpha$ and the Ledoux-Talagrand contraction inequality for Radamacher processes (see (5.61) in \citet{wainwright2019high} or Section~{4.2} in \citet{ledoux1991probability}), and (iv) follows from Holder's inequality. To show (v), note that each $\eps_i \tilde E_{a(i), b(i)}$ is mean zero with operator norm $1$ and variance $\text{var} ( \eps_i \tilde E_{a(i), b(i)} ) = \frac{1}{d} I_d$, so by matrix Bernstein (Theorem~{6.17} of \citet{wainwright2019high}) we have
    \begin{align*}
        \PP \left( \left\| \frac{1}{m}\sum_{i=1}^m \eps_i \tilde E_{a(i), b(i)} \right\|_\text{op} \geq t \right) \leq 2d\exp\left\{ - \frac{mt^2}{2(\frac{1}{d} + t)} \right\}.
    \end{align*}
    Then, for $m \geq d \log d$, we can integrate to bound the expectation by $16 \sqrt{\frac{\log d}{dm}}$: in particular, by Exercise~{2.8(a)} of \citet{wainwright2019high}, we have that
    \begin{align*}
        \PP(Z \geq t) \leq C e^{-\frac{t^2}{2(\nu^2 + bt)}} \implies \EE Z &\leq 2 \nu (\sqrt{\pi} + \sqrt{\log C}) + 4b (1 + \log C)
    \end{align*}
    where we have $C = 2d$, $\nu^2 = \frac{1}{dm}$, and $b = \frac{1}{m}$, resulting in
    \begin{align*}
        \EE \left\| \frac{1}{m}\sum_{i=1}^m \eps_i \tilde E_{a(i), b(i)} \right\|_\text{op} &\leq 2 \frac{1}{\sqrt{dm}}(\sqrt{\pi} + \sqrt{\log(2d)}) + 4 \frac{1}{m}(1 + \log (2d)) \\
        &\leq 16 \sqrt{\frac{\log d}{dm}},
    \end{align*}
    where the second inequality follows from combining terms and using the fact that the first term is larger when $m \geq d \log d$. Putting everything together, we have that
    \begin{align}\label{eqn:rsc-part-1}
        \PP\left( Z(D, \rho) \leq 128 \alpha \rho \sqrt{\frac{\log d}{dm}} + 4 \alpha D \sqrt{\frac{\delta'}{d(d-1)m}} + 4 \alpha^2\frac{\delta'}{m} \right) \geq 1 - e^{-\delta'}.
    \end{align}

    \textbf{Peeling}: given this bound, we can use a peeling argument to extend to general $D$ and $\rho$. Our approach will be to cover all possible $\Delta$ with the sets
    \begin{align*}
        \QQ_{k,\ell} = \{ \Delta \in \RR^{d \times d} :\ &\| P_\text{off-diag}(\Delta) \|_\text{max} = \alpha,\ \alpha 2^{k - 1} \leq \| P_\text{off-diag}(\Delta) \|_F \leq \alpha 2^{k},\ \alpha 2^{\ell - 1} \leq \| P_\text{off-diag}(\Delta) \|_\text{nuc} \leq \alpha 2^{\ell} \}.
    \end{align*}
    The idea is that we can use the union bound to ensure that the uniform bound in the first part (Equation~\ref{eqn:rsc-part-1}) applies to each set $\QQ_{k,\ell}$. Then, because $\QQ_{k,\ell}$ captures values of $D$ and $\rho$ up to factors of 2, we can produce the desired inequality for general $\Delta$ while only losing constant factors in the deviation terms.
    
    The first step is to bound the number of such sets that we need, which we can do by upper- and lower-bounding the Frobenius and nuclear norms with respect to $\alpha$:
    \begin{align*}
        \alpha &= \| P_\text{off-diag} (\Delta) \|_\text{max} \leq \| P_\text{off-diag} (\Delta) \|_F \leq d \| P_\text{off-diag} (\Delta) \|_\text{max} = d \alpha \\
        \alpha &= \| P_\text{off-diag} (\Delta) \|_\text{max} \leq \| P_\text{off-diag} (\Delta) \|_\text{nuc} \leq d^{3/2} \| P_\text{off-diag} (\Delta) \|_\text{max} = d^{3/2} \alpha.
    \end{align*}
    Therefore, it suffices to have $k = 1, 2, ..., \lceil \log d \rceil$ and $\ell = 1, 2, ..., \lceil (3/2) \log d \rceil$. By the union bound, the probability that the bound in Equation~\ref{eqn:rsc-part-1} holds for all of the sets $\QQ_{k,\ell}$ is at least $\geq 1 - \lceil (3/2) \log d \rceil \lceil \log d \rceil 2\exp\left\{- \delta' \right\}$, which we can bound by $\geq 1 - e^{-\delta}$ for $\delta \geq \log 6 + \log \log d$ by setting $\delta' = 3 \delta$. 
    
    Then, for any specific $\Delta$, letting $k, \ell$ be the indices such that $\Delta \in \QQ_{k, \ell}$, we have that
    \begin{align*}
        \frac{1}{m}\sum_{i=1}^m \langle \Delta, \tilde E_{a(i), b(i)} \rangle^2  &\overset{(i)}{\geq} \frac{1}{d(d-1)}\| P_\text{off-diag}(\Delta) \|_F^2 - c_1 \alpha \rho \sqrt{\frac{\log d}{dm}} - c_2 \alpha D \sqrt{\frac{\delta}{d(d-1)m}} - c_3 \alpha^2\frac{\delta}{m} \\
        &= \frac{1}{d(d-1)}\| P_\text{off-diag}(\Delta) \|_F^2 - c_1 \alpha (\alpha 2^\ell) \sqrt{\frac{\log d}{dm}} - c_2 \alpha (\alpha 2^k) \sqrt{\frac{\delta}{d(d-1)m}} - c_3 \alpha^2\frac{\delta}{m} \\
        & \overset{(ii)}{\geq} \frac{1}{d(d-1)}\| P_\text{off-diag}(\Delta) \|_F^2 - 2c_1 \alpha \| P_\text{off-diag}(\Delta) \|_\text{nuc} \sqrt{\frac{\log d}{dm}} \\
        &- 2c_2 \alpha \| P_\text{off-diag}(\Delta) \|_F \sqrt{\frac{\delta}{d(d-1)m}} - c_3 \alpha^2\frac{\delta}{m},
    \end{align*}
    where (i) follows from the fact that $\QQ_{k, \ell} \subseteq \QQ(D, \rho)$ for $D = \alpha2^k, \rho = \alpha2^\ell$, and (ii) follows from the fact that $\alpha 2^{k-1} \leq \| P_\text{off-diag}(\Delta) \|_F$ and $\alpha 2^{\ell-1} \leq \| P_\text{off-diag}(\Delta) \|_\text{nuc}$.
    
    \textbf{Diagonal terms}: the bound for the diagonal terms proceeds in the same way, but with slightly different quantities. As before, we'll assume that $\|P_\text{diag}(\Delta)\|_\text{max} = \alpha$. Fixing $D$ and $\rho$, we'll define the set $\QQ(D, \rho)$ as
    \begin{align*}
        \QQ(D, \rho) = \{ \Delta \in \RR^{d \times d} :\ &\| P_\text{diag}(\Delta) \|_\text{max} = \alpha,\ \| P_\text{diag}(\Delta) \|_F \leq D,\ \| P_\text{diag}(\Delta) \|_\text{nuc} \leq \rho \},
    \end{align*}
    and we'll define $Z(D, \rho)$ as
    \begin{align*}
        Z(D, \rho) = \sup_{\Delta \in \QQ(D, \rho)} \left| \frac{1}{m}\sum_{i=1}^m \frac{1}{2}[\Delta_{a(i), a(i)}^2 + \Delta_{b(i), b(i)}^2]  - \frac{1}{d} \| P_\text{diag}(\Delta) \|_F^2 \right|,
    \end{align*}
    where we have that each term has expectation
    \begin{align*}
        \EE \left[ \frac{1}{2}[\Delta_{a(i), a(i)}^2 + \Delta_{b(i), b(i)}^2] \right] = \frac{1}{d} \| P_\text{diag}(\Delta) \|_F^2.
    \end{align*}
    We again have that each term is uniformly bounded by $\alpha^2$ and has uniformly bounded variance, but this time with $\alpha^2 \frac{1}{d} D^2$. The bound for $\EE Z(D, \rho) \leq 64 \alpha \rho \sqrt{\frac{\log d}{dm}}$ proceeds exactly as before, producing the functional Bernstein inequality
    \begin{align*}
        \PP\left( Z(D, \rho) \leq 128 \alpha \rho \sqrt{\frac{\log d}{dm}} + 4 \alpha D \sqrt{\frac{\delta'}{dm}} + 4 \alpha^2\frac{\delta'}{m} \right) \geq 1 - e^{-\delta'}.
    \end{align*}
    Finally, we can proceed with exactly the same peeling argument as before, leading to the following bound holding uniformly with probability $\geq 1 - e^{-\delta}$:
    \begin{align*}
        \frac{1}{m}\sum_{i=1}^m \frac{1}{2}[\Delta_{a(i), a(i)}^2 + \Delta_{b(i), b(i)}^2] &\geq \frac{1}{d} \| P_\text{diag}(\Delta) \|_F^2 - 2c_1 \alpha \| P_\text{diag}(\Delta) \|_\text{nuc} \sqrt{\frac{\log d}{dm}} \\
        &- 2c_2 \alpha \| P_\text{diag}(\Delta) \|_F \sqrt{\frac{\delta}{dm}} - c_3 \alpha^2\frac{\delta}{m},
    \end{align*}
    which completes the proof.
\end{proof}

\subsection{Decomposability}\label{subsec:decomp}
In this section, we show that for large enough regularization strength $\lambda$, the error $\Delta = \hat \Theta - \Theta^*$ must have small nuclear norm. The arguments in this section proceed exactly as in Proposition 9.13 of \citet{wainwright2019high}, which we reproduce here for completeness. To aid our analysis of the nuclear norm regularizer, we'll first define the subspaces $\MM$ and $\bar \MM$ of $\RR^{d \times d}$ as follows (where $\mathbb{S}^\perp$ denotes the orthogonal complement of a subspace $\mathbb{S}$), following~\citet{negahban2012unified}:
\begin{align*}
    \MM &= \{ \Theta :\ \text{rowspace}(\Theta) \subseteq \text{rowspace}(\Theta^*),\ \text{colspace}(\Theta) \subseteq \text{colspace}(\Theta^*) \} \\
    \bar \MM^\perp &= \{ \Theta :\ \text{rowspace}(\Theta) \subseteq \text{rowspace}(\Theta^*)^\perp,\ \text{colspace}(\Theta) \subseteq \text{colspace}(\Theta^*)^\perp \}.
\end{align*}
From these subspaces, we can define $\MM^\perp$ and $\bar \MM$ accordingly as their orthogonal complements. As an example, if $\Theta^*$ is the rank-$r$ matrix with the $r \times r$ identity in the top left corner and zeros otherwise, then we have
\begin{align*}
    \MM &= \begin{bmatrix*}[l]
        \Gamma_{r \times r} & 0 \\
        0 & 0
    \end{bmatrix*} \\
    \MM^\perp &= \begin{bmatrix*}[l]
        0 & \Gamma_{r \times (d-r)} \\
        \Gamma_{(d-r) \times r} & \Gamma_{(d-r) \times (d-r)}
    \end{bmatrix*} \\
    \bar \MM^\perp &= \begin{bmatrix*}[l]
        0 & 0 \\
        0 & \Gamma_{(d-r) \times (d-r)}
    \end{bmatrix*} \\
    \bar \MM &= \begin{bmatrix*}[l]
        \Gamma_{r \times r} & \Gamma_{r \times (d-r)} \\
        \Gamma_{(d-r) \times r} & 0
    \end{bmatrix*},
\end{align*}
where each $\Gamma_{a \times b}$ represents an arbitrary matrix in $\RR^{a \times b}$. With these subspaces defined, we have the following two facts: (1) for any $A \in \MM$ and $B \in \bar \MM^\perp$, we have $\| A + B \|_\text{nuc} = \| A \|_\text{nuc} + \| B \|_\text{nuc}$, i.e.\ the nuclear norm is decomposable with respect to $(\MM, \bar \MM)$~\citep{negahban2012unified}, and (2) if $\Theta^*$ is rank $r$, then all matrices in $\MM$ are at most rank $r$ and all matrices in $\bar \MM$ are at most rank $2r$. Broadly speaking, the proof will proceed by using the optimality of $\hat \Theta$ (with large enough regularization strength $\lambda$) to bound $\| \Delta \|_\text{nuc}$, which involves projecting $\Delta$ onto these subspaces and using decomposability of the nuclear norm.
\begin{lem}\label{lem:decomp}[Proposition 9.13 of \citet{wainwright2019high}]
Let $\Delta = \hat \Theta - \Theta^*$ denote the error, where $\hat \Theta$ solves the optimization problem defined in Equation~\ref{eqn:program}. Also, suppose that the regularization strength $\lambda$ is at least
\begin{align*}
    \lambda \geq 2 \| \nabla \cL(\Theta^*) \|_\text{op}.
\end{align*}
Then, we have
\begin{align*}
    \| \Delta \|_\text{nuc} \leq 4 \sqrt{2r} \| \Delta \|_F.
\end{align*}
\end{lem}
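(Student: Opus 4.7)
The plan is to follow the standard decomposability-based nuclear-norm bound from \citet{negahban2012unified, wainwright2019high}, specialized to the subspaces $(\MM, \bar\MM)$ defined in the excerpt. The key inputs are: convexity of $\cL$ (which gives a first-order lower bound), Holder's inequality $|\langle \nabla\cL(\Theta^*), \Delta\rangle| \leq \|\nabla\cL(\Theta^*)\|_\text{op}\|\Delta\|_\text{nuc}$, the assumption $\lambda \geq 2\|\nabla\cL(\Theta^*)\|_\text{op}$, decomposability of the nuclear norm with respect to $(\MM, \bar\MM)$, and the rank fact that any $A \in \bar\MM$ satisfies $\text{rank}(A) \leq 2r$.

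First I would use optimality of $\hat\Theta$ together with convexity of $\cL$ to write
\begin{align*}
    \lambda(\|\hat\Theta\|_\text{nuc} - \|\Theta^*\|_\text{nuc}) &\leq \cL(\Theta^*) - \cL(\hat\Theta) \leq -\langle \nabla\cL(\Theta^*), \Delta\rangle \\
    &\leq \|\nabla\cL(\Theta^*)\|_\text{op}\|\Delta\|_\text{nuc} \leq \tfrac{\lambda}{2}\|\Delta\|_\text{nuc},
\end{align*}
so dividing by $\lambda$ gives $\|\hat\Theta\|_\text{nuc} - \|\Theta^*\|_\text{nuc} \leq \tfrac{1}{2}\|\Delta\|_\text{nuc}$.

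Next I would project $\Delta$ onto $\bar\MM$ and $\bar\MM^\perp$, writing $\Delta = \Delta_{\bar\MM} + \Delta_{\bar\MM^\perp}$. By triangle and decomposability (since $\Theta^* \in \MM$ and $\Delta_{\bar\MM^\perp} \in \bar\MM^\perp$),
\begin{align*}
    \|\hat\Theta\|_\text{nuc} &= \|\Theta^* + \Delta_{\bar\MM^\perp} + \Delta_{\bar\MM}\|_\text{nuc} \geq \|\Theta^* + \Delta_{\bar\MM^\perp}\|_\text{nuc} - \|\Delta_{\bar\MM}\|_\text{nuc} \\
    &= \|\Theta^*\|_\text{nuc} + \|\Delta_{\bar\MM^\perp}\|_\text{nuc} - \|\Delta_{\bar\MM}\|_\text{nuc}.
\end{align*}
Combining with the previous display and using $\|\Delta\|_\text{nuc} \leq \|\Delta_{\bar\MM}\|_\text{nuc} + \|\Delta_{\bar\MM^\perp}\|_\text{nuc}$ yields $\|\Delta_{\bar\MM^\perp}\|_\text{nuc} \leq 3\|\Delta_{\bar\MM}\|_\text{nuc}$, and hence $\|\Delta\|_\text{nuc} \leq 4\|\Delta_{\bar\MM}\|_\text{nuc}$.

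Finally, I would finish by invoking the rank bound on $\bar\MM$: since any element of $\bar\MM$ has rank at most $2r$, the standard inequality $\|A\|_\text{nuc} \leq \sqrt{\text{rank}(A)}\,\|A\|_F$ gives $\|\Delta_{\bar\MM}\|_\text{nuc} \leq \sqrt{2r}\,\|\Delta_{\bar\MM}\|_F \leq \sqrt{2r}\,\|\Delta\|_F$, where the last step uses that orthogonal projection onto $\bar\MM$ (with respect to the Frobenius inner product) is non-expansive in Frobenius norm. Chaining these bounds produces $\|\Delta\|_\text{nuc} \leq 4\sqrt{2r}\,\|\Delta\|_F$, as desired. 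There is no real obstacle here beyond carefully verifying that the two subspace facts asserted in the excerpt hold for nuclear-norm decomposability at $\Theta^*$ and that the Frobenius projection onto $\bar\MM$ really is non-expansive; both are standard and follow directly from the SVD-based construction of $(\MM, \bar\MM)$ in \citet{negahban2012unified}.
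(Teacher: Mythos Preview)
Your proof is correct and follows essentially the same route as the paper's own argument: optimality of $\hat\Theta$ combined with convexity of $\cL$, Holder's inequality, and the assumption $\lambda \geq 2\|\nabla\cL(\Theta^*)\|_\text{op}$ to obtain the basic inequality, then decomposability of the nuclear norm with respect to $(\MM,\bar\MM)$ and reverse triangle to deduce $\|\Delta_{\bar\MM^\perp}\|_\text{nuc} \leq 3\|\Delta_{\bar\MM}\|_\text{nuc}$, and finally the rank-$2r$ bound on $\bar\MM$ to convert to Frobenius norm. The only difference is cosmetic ordering of the steps.
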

\begin{proof}
    First, note that we can project the error $\Delta$ onto orthogonal subspaces $\bar \MM$ and $\bar \MM^\perp$ as follows:
    \begin{align*}
        \| \Delta \|_\text{nuc} &= \| \Delta_{\bar \MM^\perp} + \Delta_{\bar \MM} \|_\text{nuc} \\
        \text{(triangle)}\ &\leq \| \Delta_{\bar \MM^\perp} \|_\text{nuc} + \| \Delta_{\bar \MM} \|_\text{nuc} \\
        &\overset{(i)}{\leq} \| \Delta_{\bar \MM^\perp} \|_\text{nuc} + \sqrt{2r} \| \Delta_{\bar \MM} \|_F \\
        &\leq \| \Delta_{\bar \MM^\perp} \|_\text{nuc} + \sqrt{2r} \| \Delta \|_F,
    \end{align*}
    where (i) follows from the fact that any matrix in $\bar \MM$ is at most rank $2r$. Therefore, at this point it suffices to bound $\| \Delta_{\bar \MM^\perp} \|_\text{nuc}$. By the optimality of $\hat \Theta$, we have
    \begin{align*}
        0 &\geq L(\Theta^* + \Delta) - L(\Theta^*) + \lambda \left( \| \Theta^* + \Delta \|_\text{nuc} - \| \Theta^* \|_\text{nuc} \right) \\
        &\overset{(i)}{\geq} \langle \nabla L(\Theta^*), \Delta \rangle + \lambda \left( \| \Theta^* + \Delta \|_\text{nuc} - \| \Theta^* \|_\text{nuc} \right) \\
        &\overset{(ii)}{\geq} -\| \nabla L(\Theta^*) \|_\text{op} \| \Delta \|_\text{nuc} + \lambda \left( \| \Theta^* + \Delta \|_\text{nuc} - \| \Theta^* \|_\text{nuc} \right) \\
        &\overset{(iii)}{\geq} - \frac{\lambda}{2} \| \Delta \|_\text{nuc} + \lambda \left( \| \Theta^* + \Delta \|_\text{nuc} - \| \Theta^* \|_\text{nuc} \right),
    \end{align*}
    where (i) follows from the convexity of $\cL$, (ii) from Holder's inequality, and (iii) from our assumption that $\lambda \geq 2 \| \nabla \cL (\Theta^*) \|_\text{op}$. Next, we'll project the error $\Delta$ onto $\bar \MM$ and $\bar \MM^\perp$ to expand the second term as follows:
    \begin{align*}
        \| \Theta^* + \Delta \|_\text{nuc} - \| \Theta^* \|_\text{nuc} &= \| \Theta^* + \Delta_{\bar \MM^\perp} + \Delta_{\bar \MM} \|_\text{nuc} - \| \Theta^* \|_\text{nuc} \\
        &\overset{(i)}{\geq} \| \Theta^* + \Delta_{\bar \MM^\perp} \|_\text{nuc} - \| \Delta_{\bar \MM} \|_\text{nuc} - \| \Theta^* \|_\text{nuc} \\
        &\overset{(ii)}{=} \| \Theta^* \|_\text{nuc} + \| \Delta_{\bar \MM^\perp} \|_\text{nuc} - \| \Delta_{\bar \MM} \|_\text{nuc} - \| \Theta^* \|_\text{nuc} \\
        &= \| \Delta_{\bar \MM^\perp} \|_\text{nuc} - \| \Delta_{\bar \MM} \|_\text{nuc},
    \end{align*}
    where (i) follows from reverse triangle, and (ii) follows from decomposability and the fact that $\Theta^* \in \MM$. Substituting the expression for the second term, we have that
    \begin{align*}
        0 &\geq -\frac{\lambda}{2} \| \Delta \|_\text{nuc} + \lambda (\| \Delta_{\bar \MM^\perp} \|_\text{nuc} - \| \Delta_{\bar \MM} \|_\text{nuc})\\
        &\overset{(i)}{\geq} -\frac{\lambda}{2} (\| \Delta_{\bar \MM^\perp} \|_\text{nuc} + \| \Delta_{\bar \MM} \|_\text{nuc}) + \lambda (\| \Delta_{\bar \MM^\perp} \|_\text{nuc} - \| \Delta_{\bar \MM} \|_\text{nuc}) \\
        &= \frac{\lambda}{2} (\| \Delta_{\bar \MM^\perp} \|_\text{nuc} - 3\| \Delta_{\bar \MM} \|_\text{nuc}),
    \end{align*}
    where (i) follows from the triangle inequality. Rearranging, we have that $\|\Delta_{\bar \MM^\perp}\|_\text{nuc} \leq 3 \| \Delta_{\bar \MM} \|_\text{nuc}$. Putting everything together, we have that $\| \Delta \|_\text{nuc} \leq 4 \| \Delta_{\bar \MM} \|_\text{nuc} \leq 4 \sqrt{2r} \| \Delta \|_F$ as desired.
\end{proof}

\subsection{Proof of theorem}\label{subsec:thmproof}
Finally, we can put everything together to prove Theorem~\ref{thm:main}, which we reproduce here:
\\\\
\noindent \textbf{Theorem}. {\itshape Let $\hat \Theta$ be the solution of the optimization problem defined in Equation~\ref{eqn:program}, where $\lambda$ is set to $16 \alpha \sqrt{\frac{ \log d + \delta}{dm}}$. Also, suppose that $X$ is rank $r$ with $\| X \|_\text{max}^2 \leq \alpha$, and $m \geq d(\log d + \delta)$. Then, with probability $\geq 1 - 3e^{-\delta}$, we have that
\begin{align*}
    \frac{1}{d^2} \| \hat \Theta - \Theta^* \|_F^2 \lesssim \alpha^2 \frac{rd(\log d + \delta)}{m}.
\end{align*}
}

\begin{proof}
By our setting of $\lambda = 16 \alpha \sqrt{\frac{\log d + \delta}{dm}}$, we can apply Lemma~\ref{lem:opnormbound} (operator norm bound) to show that
\begin{align*}
    \frac{\lambda}{2} \geq \| \nabla \cL(\Theta^*) \|_\text{op}
\end{align*}
with probability $\geq 1 - e^{-\delta}$. We'll also condition on the high-probability bound in Lemma~\ref{lem:rsc} (restricted strong convexity), which holds with probability $\geq 1 - 2e^{-\delta}$ (so by the union bound, both hold with probability $\geq 1 - 3 e^{-\delta}$):
\begin{align*}
    \cL(\Theta^* + \Delta) - \cL(\Theta^*) - \langle \nabla \cL(\Theta^*), \Delta \rangle &\geq \frac{1}{d^2} \| \Delta \|_F^2 - c_1 \| \Delta \|_\text{max} \| \Delta \|_\text{nuc} \sqrt{\frac{\log d}{dm}} \\
    &- c_2 \| \Delta \|_\text{max} \| \Delta \|_F \sqrt{\frac{\delta}{dm}} - c_3 \| \Delta \|_\text{max}^2 \frac{\delta}{m}.
\end{align*}
First, note that we have $\| \Delta \|_\text{max} \leq \| \hat \Theta \|_\text{max} + \| \Theta^* \|_\text{max} \leq 2\alpha$, where $\| \hat \Theta \|_\text{max} \leq \alpha$ is a constraint of the optimization problem (Equation~\ref{eqn:program}) and $\| \Theta^* \|_\text{max} \leq \alpha$ follows from our assumption that $\| X \|_\text{max}^2 \leq \alpha$:
\begin{align*}
    \max_{ij} | \Theta^*_{ij} | &= \max_{ij} \left| \frac{1}{m} \sum_{k=1}^m X_{ki} X_{kj} \right| \\
    &\leq \max_{ij} \frac{1}{m} \sum_{k=1}^m | X_{ki} | | X_{kj} | \\
    &\leq \frac{1}{m} \sum_{k=1}^m \alpha.
\end{align*}
Then, given these lemmas, we can prove the result as follows: first, by the optimality of $\hat \Theta$, we have
\begin{align*}
    0 &\geq \cL(\Theta^* + \Delta) - \cL(\Theta^*) + \lambda ( \| \Theta^* + \Delta \|_\text{nuc} - \| \Theta^* \|_\text{nuc} ) \\
    &\overset{(i)}{\geq} \boxed{ \langle \nabla \cL(\Theta^*), \Delta \rangle + \frac{1}{d^2} \| \Delta \|_F^2 - c_1 \alpha \| \Delta \|_\text{nuc} \sqrt{\frac{\log d}{dm}} - c_2 \alpha \| \Delta \|_F \sqrt{\frac{\delta}{dm}} - c_3 \alpha^2 \frac{\delta}{m} }\\
    &+ \lambda ( \| \Theta^* + \Delta \|_\text{nuc} - \| \Theta^* \|_\text{nuc} ) \\
    &\overset{(ii)}{\geq} \boxed{ -\| \nabla \cL(\Theta^*) \|_\text{op} \| \Delta \|_\text{nuc} } + \frac{1}{d^2} \| \Delta \|_F^2 - c_1 \alpha \| \Delta \|_\text{nuc} \sqrt{\frac{\log d}{dm}} - c_2 \alpha \| \Delta \|_F \sqrt{\frac{\delta}{dm}} - c_3 \alpha^2 \frac{\delta}{m} \\
    &+ \lambda ( \| \Theta^* + \Delta \|_\text{nuc} - \| \Theta^* \|_\text{nuc} ) \\
    &\overset{(iii)}{\geq} \boxed{ -\frac{\lambda}{2} \| \Delta \|_\text{nuc} } + \frac{1}{d^2} \| \Delta \|_F^2 - c_1 \alpha \| \Delta \|_\text{nuc} \sqrt{\frac{\log d}{dm}} - c_2 \alpha \| \Delta \|_F \sqrt{\frac{\delta}{dm}} - c_3 \alpha^2 \frac{\delta}{m} \\
    &+ \lambda ( \| \Theta^* + \Delta \|_\text{nuc} - \| \Theta^* \|_\text{nuc} ),
\end{align*}
where (i) follows from Lemma~\ref{lem:rsc} (restricted strong convexity), (ii) from Holder's inequality, and (iii) from our setting of $\lambda$ and Lemma~\ref{lem:opnormbound} (operator norm bound). Next, we can combine terms involving $\lambda$ and use our bound on the nuclear norm (Lemma~\ref{lem:decomp}), producing
\begin{align*}
    &\overset{(i)}{\geq} -\frac{\lambda}{2} \| \Delta \|_\text{nuc} + \frac{1}{d^2} \| \Delta \|_F^2 - c_1 \alpha \| \Delta \|_\text{nuc} \sqrt{\frac{\log d}{dm}} - c_2 \alpha \| \Delta \|_F \sqrt{\frac{\delta}{dm}} - c_3 \alpha^2 \frac{\delta}{m} + \boxed{ \lambda (-\|\Delta\|_\text{nuc}) } \\
    &= \frac{1}{d^2} \| \Delta \|_F^2 - c_1 \alpha \| \Delta \|_\text{nuc} \sqrt{\frac{\log d}{dm}} - c_2 \alpha \| \Delta \|_F \sqrt{\frac{\delta}{dm}} - c_3 \alpha^2 \frac{\delta}{m} - \boxed{ \frac{3}{2} \lambda \|\Delta\|_\text{nuc} } \\
    &\overset{(ii)}{\geq} \frac{1}{d^2} \| \Delta \|_F^2 - c_1 \alpha \| \Delta \|_\text{nuc} \sqrt{\frac{\log d}{dm}} - c_2 \alpha \| \Delta \|_F \sqrt{\frac{\delta}{dm}} - c_3 \alpha^2 \frac{\delta}{m} - c_4 \|\Delta\|_\text{nuc} \boxed{ \alpha \sqrt{\frac{\log d + \delta}{dm}} } \\
    &\overset{(iii)}{\geq} \frac{1}{d^2} \| \Delta \|_F^2 - c_1 \alpha\ \boxed{ \sqrt{r} \| \Delta \|_F } \sqrt{\frac{\log d}{dm}} - c_2 \alpha \| \Delta \|_F \sqrt{\frac{\delta}{dm}} - c_3 \alpha^2 \frac{\delta}{m} - c_4\ \boxed{ \sqrt{r} \| \Delta \|_F }\ \alpha \sqrt{\frac{\log d + \delta}{dm}},
\end{align*}
where (i) follows from reverse triangle, (ii) from our setting of $\lambda$, and (iii) from Lemma~\ref{lem:decomp} (nuclear norm bound). At this point, we can rearrange to produce the bound
\begin{align*}
    \frac{1}{d^2} \| \Delta \|_F^2 \lesssim \text{max} \left( \alpha \sqrt{r} \| \Delta \|_F \sqrt{\frac{\log d}{dm}},\ \alpha \| \Delta \|_F \sqrt{\frac{\delta}{dm}},\ \alpha^2 \frac{\delta}{m},\ \sqrt{r} \| \Delta \|_F \alpha \sqrt{\frac{\log d + \delta}{dm}} \right),
\end{align*}
which we can simplify into
\begin{align*}
    \frac{1}{d} \| \Delta \|_F &\lesssim \text{max} \left( \alpha \sqrt{\frac{rd\log d}{m}},\ \alpha \sqrt{\frac{d\delta}{m}},\ \alpha \sqrt{\frac{\delta}{m}},\ \alpha \sqrt{\frac{rd(\log d + \delta)}{m}} \right) \\
    &\lesssim \alpha \sqrt{\frac{rd(\log d + \delta)}{m}},
\end{align*}
completing the proof.
\end{proof}

\section{Experiments and hyperparameters}\label{subsec:hyperparams}
Experiments were run on TITAN RTX and RTX 3090 GPUs with 24 gigabytes of memory; in all experiments we set the random seed to zero. Optimization was done via Adam~\citep{kingma:adam} with $\text{lr} = 1\mathrm{e}{-10},\ \beta = (0.9, 0.999)$, and $10{,}000$ steps.

\end{document}